\let\emptyset\varnothing
\def\equationautorefname#1#2\null{Eq.#1(#2\null)}
\newtheorem{definition}{Definition}
\newtheorem{theorem}{Theorem}
\newtheorem{lemma}{Lemma}
\newtheorem{proposition}{Proposition}
\newtheorem{corollary}{Corollary}
\newtheorem{example}{Example}
\tikzset{%
  >={Latex[width=2mm,length=2mm]},
  base/.style = {rectangle, rounded corners, 
                 minimum width=1.6cm, minimum height=0.8cm, 
                 text centered},
  start/.style = {base, draw=black, fill=orange!30},
  stop/.style = {base, draw=black, fill=red!30},
  process/.style = {base, draw=black, fill=orange!15,},
  criteria/.style = {ellipse, text centered, fill=Aquamarine!40, inner sep=0.05cm},
}
\title{From monoliths to modules: Decomposing transducers for efficient world modelling}
\author{Alexander Boyd\textsuperscript{1,2}, Franz Nowak\textsuperscript{3,4}, David Hyland\textsuperscript{5}, Manuel Baltieri\textsuperscript{6,1}, \\Fernando E. Rosas\textsuperscript{7-9}}
\begin{document}

\maketitle  

\begin{abstract}
World models have been recently proposed as sandbox environments in which AI agents can be trained and evaluated before deployment. 
While realistic world models often have high computational demands, this can often be alleviated by exploiting the fact that real-world scenarios tend to involve subcomponents that interact in a modular manner. 
In this paper, we explore this idea by 
developing a framework for decomposing complex world models represented by transducers, a class of models generalising POMDPs. 
Whereas the composition of transducers is well understood, our results clarify how to invert this process deriving sub-transducers operating on distinct input-output subspaces, 
enabling parallelizable and interpretable alternatives to monolithic world modelling that can support distributed inference. 
Overall, these results lay groundwork 
for bridging the computational efficiency required for real-world inference and the structural transparency demanded by AI safety.
\end{abstract}

\section{Introduction}

Advances in deep reinforcement learning have produced agents that can operate in increasingly complex environments, from dexterous robotic control and large-scale games to open-ended dialogue and tool use~\citep{andrychowicz2020learning,berner2019dota,ouyang2022training,rajeswaran2017learning}. As these systems become more powerful and autonomous and are considered for deployment in high-stakes settings, questions about reliability, safety, and interpretability are beginning to take central stage~\citep{bengio2024international,glanois2024survey,tang2024prioritizing}. One way to address these questions is by adopting world models as controlled testbeds, serving as synthetic environments in which agents can be trained, probed, and stress-tested before deployment~\citep{dalrymple2024towards,bruce2024genie,diaz2023connecting}. However, the usefulness of such sandboxing depends critically on our ability to build world models that are faithful enough to guarantee that the agent’s behaviour in simulation can be transferred to real-world settings~\citep{rosasai,ding2025understanding}.

A world model~\citep{ding2025understanding} can be formally described as a transducer that turns input sequences (e.g., actions) into output sequences (e.g., observations and rewards). 
This transduction, however, happens via interwoven dynamics often involving co-occurring physical processes alongside social interactions, biological rhythms, or engineered feedback loops. A surfer, for instance, must simultaneously navigate the ocean's turbulence and the coordinated movements of their companions; naively modelling all of this as a single high-dimensional mechanism makes learning and inference expensive and blurs the underlying structure. 
For world models to be useful, they should instead expose \emph{modularity}: distinct components should track different aspects of the world, while still interacting and combining to produce coherent global behaviour. 
Prior work related to the modularity of transduction has mostly approached this from a bottom-up perspective, focusing on how small components can be composed into layered networks that capture complex input-output relationships~\citep{hopcroft1979introduction,mohri-1997-finite,mohri2002weighted}. 
Similarly, approaches such as factored Markov decision problems (MDPs)~\citep{boutilier2000stochastic,guestrin2003efficient} and reward machines~\citep{icarte2018using,dohmen2022inferring} have proposed bottom-up procedures to design complex environments or reward structures by composing simple elements in various topologies. 

While composition explains how modular systems can be built, for AI interpretability and safety we often face the inverse problem: given a large, seemingly entangled world model, how can we decompose it into simpler, interpretable pieces without sacrificing predictive power?

This question has been explored from the perspective of model-based reinforcement learning, where agents learn a model of the underlying transition dynamics of the environment that can be used for several downstream tasks such as planning and training policies~\citep{hafner2019dream,hafner2025trainingagentsinsidescalable,hansen2023td}. 
Several works have proposed techniques for incorporating inductive biases to make neural networks represent the world as a set of discrete objects, which have demonstrated benefits related to generalisation~\citep{goyal2021recurrent,zhao2022toward,feng2023learning}, sample efficiency~\citep{rodriguezsanchez2025pixelsfactorslearningindependently}, interpretability~\citep{mosbach2025sold}, compositional generation~\citep{baek2025dreamweaver}, and robustness to noise~\citep{liu2023learning}.

\begin{figure*}[h!]
\centering
\includegraphics[width=\columnwidth]{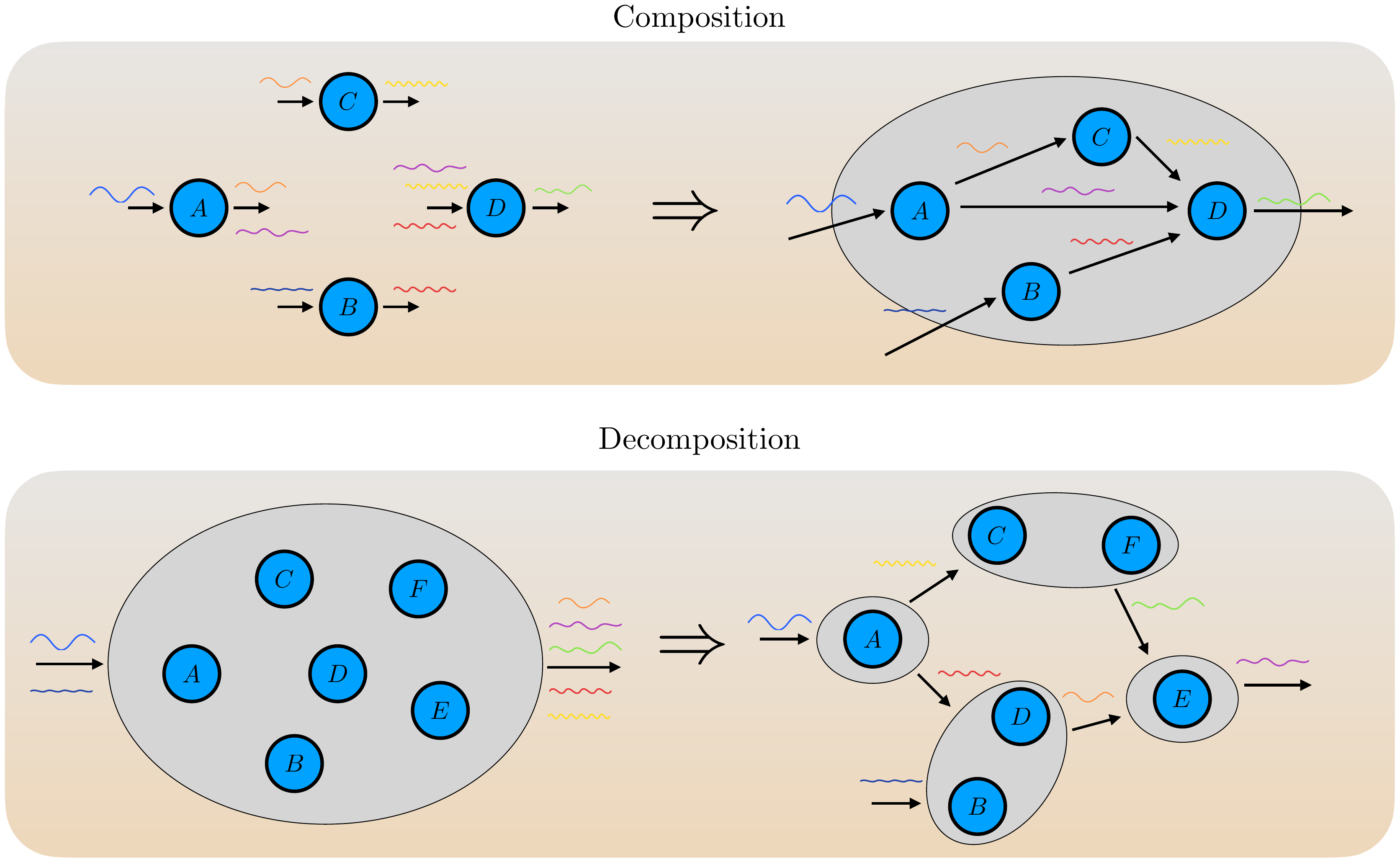}
\caption{\centering\small \emph{In this work, we first present a method for composing stochastic environments into larger ones. We
then use this framework to identify procedures that reverse the process, decomposing complex environments
into simpler, modular subcomponents.}}
\label{fig:Overview} 
\end{figure*}

Inspired by these results, here we develop a general framework to address this question in the language of computational mechanics~\citep{crutchfield1989inferring,shalizi2001computational}, which is used to understand when and how a complicated transducer can be factored into simpler interacting sub-transducers. 
Crucially, this factorisation does more than aid understanding: it enables distributed inference, where each sub-transducer can update its own beliefs locally, given only the parts of the input-output history that matter to it.  Thus, as shown in \autoref{fig:Overview}, our analysis reveals how transducers can be both composed and decomposed, moving between levels of complexity. 

In summary, in this paper we take the perspective of a designer who wishes to model an agent--environment interface in a way that is faithful enough for prediction, but also structured enough to support analysis and control. Within this perspective, transducers provide a canonical representation, and our main contribution is to show how such representations can themselves be decomposed into interpretable, modular components. The rest of the paper formalises transducers and their interfaces (\autoref{sec:transducers}), introduces a general notion of transducer composition (\autoref{sec:composition_trasducers}), to then develop  decomposition algorithms (\autoref{sec:decomp_framework}) and coarse-graining approaches (\autoref{sec:coarse-graining}). 
We also investigate how this approach can be used to decompose the causal states of an AI system doing inference tasks (\autoref{sec:decomposition_compmech}), opening the way to extend ongoing efforts to interpret the internal representations of transformers~\citep{shai2025transformers,piotrowski2025constrained,shai2026transformers} to high-dimensional settings. 
Related work is discussed in \autoref{sec:related_work_trasnducer_comp} and \autoref{sec:causal_discovery}.

\begin{tcolorbox}[title={\textbf{Overview and roadmap}}]
This work develops an information-theoretic framework for
composing and decomposing modular world models. 
The roadmap ahead and main contributions are as follows:
\begin{enumerate}
  \item \textbf{Interfaces and transducers as a common language (\autoref{sec:transducers}).}
  We use the notions of \emph{interfaces} and \emph{transducers} as a general
  representation for world models that can capture agents, environments,
  and predictive models within a single framework.

  \item \textbf{Composition of world models as networks of transducers (\autoref{sec:composition_trasducers}).}
  We show how transducers compose in series, parallel, and convergent
  configurations, and build a unified view that treats a network
  of transducers as a single composite world model.

  \item \textbf{Decomposition with latent variables via Intransducibility (\autoref{sec:factoring_latents}).}
  We introduce \emph{Intransducibility}, an information-theoretic diagnostic that detects when a set of variables cannot be generated by
  a causal transducer, and use it to factor a monolithic transducer into
  a sparse network when latent variables are available.

  \item \textbf{Decomposition without latents via acausality (\autoref{sec:factoring_nolatents}).}
  We define an \emph{acausality} measure that can be computed directly
  from observable signals, and show how it recovers minimally acausal
  wiring diagrams in the absence of explicit latent variables.

  \item \textbf{Coarse-graining networks of transducers (\autoref{sec:coarse-graining}).}
  We characterize when blocks of a transducer network can be merged
  without changing its observable interface, yielding principled
  multiscale coarse-grainings of world models.

  \item \textbf{Factoring causal states and belief states (\autoref{sec:decomposition_compmech}).}
  We relate our framework to $\epsilon$-transducers~\citep{barnett2015computational} and predictive state representations~\citep{littman2001predictive}, showing how minimal predictive representations inherit and
  expose the modular structure of the underlying transducer network.
\end{enumerate}
\end{tcolorbox}

\section{Preliminaries}
\label{sec:transducers}

\subsection{Interfaces}

Agents and environments can be described functionally by how they map input sequences to output sequences.  In probabilistic terms, a mapping of this kind is fully specified by a collection of conditional distributions of outputs given inputs, which we call an \emph{interface}.
\begin{definition}
An \textbf{interface} $\mathcal{I}$ is a conditional input-output process characterized by the family of consistent conditional probabilities 
\begin{align}
    \mathcal{I}(Y|X) \equiv \big\{\Pr(Y_{0:t}|X_{0:t}), \forall t\in\{0,1,2,\dots\}\big\},
\end{align}

where $X_t$ are inputs and $Y_t$ are outputs.
\end{definition}
Here, uppercase variables denote random variables and lowercase represent specific realizations.  We use the shorthand $X_{0:t} \equiv (X_0, \ldots, X_{t-1})$ for ordered sequences of random variables, and the random variable notation $\Pr(A|B) \equiv \{ \Pr(A=a|B=b) \}_{a \in \mathcal{A},b \in \mathcal{B}}$ is used to describe all elements of a conditional probability.  An interface specifies a semi-infinite stochastic process for every semi-infinite input string, describing a transformation from increasingly long strings of inputs $x_{0:t}$ to corresponding lengths of outputs $y_{0:t}$ --- like an information ratchet that performs stochastic operations on time series in sequence~\citep{boyd2016identifying, mandal2012work}.  However, this temporal interpretation is only appropriate for certain subclasses that we call \textbf{\textit{causal interfaces}}, for which future inputs $\overrightarrow{X}_t \equiv X_{t:\infty}$ cannot causally influence past outputs $\overleftarrow{Y}_t \equiv Y_{0:t}$.  Hence, causal interfaces can be interpreted as a circuit, as shown in \autoref{fig:Condensed_Interface}.

\begin{figure*}
\centering
\includegraphics[width=\columnwidth]{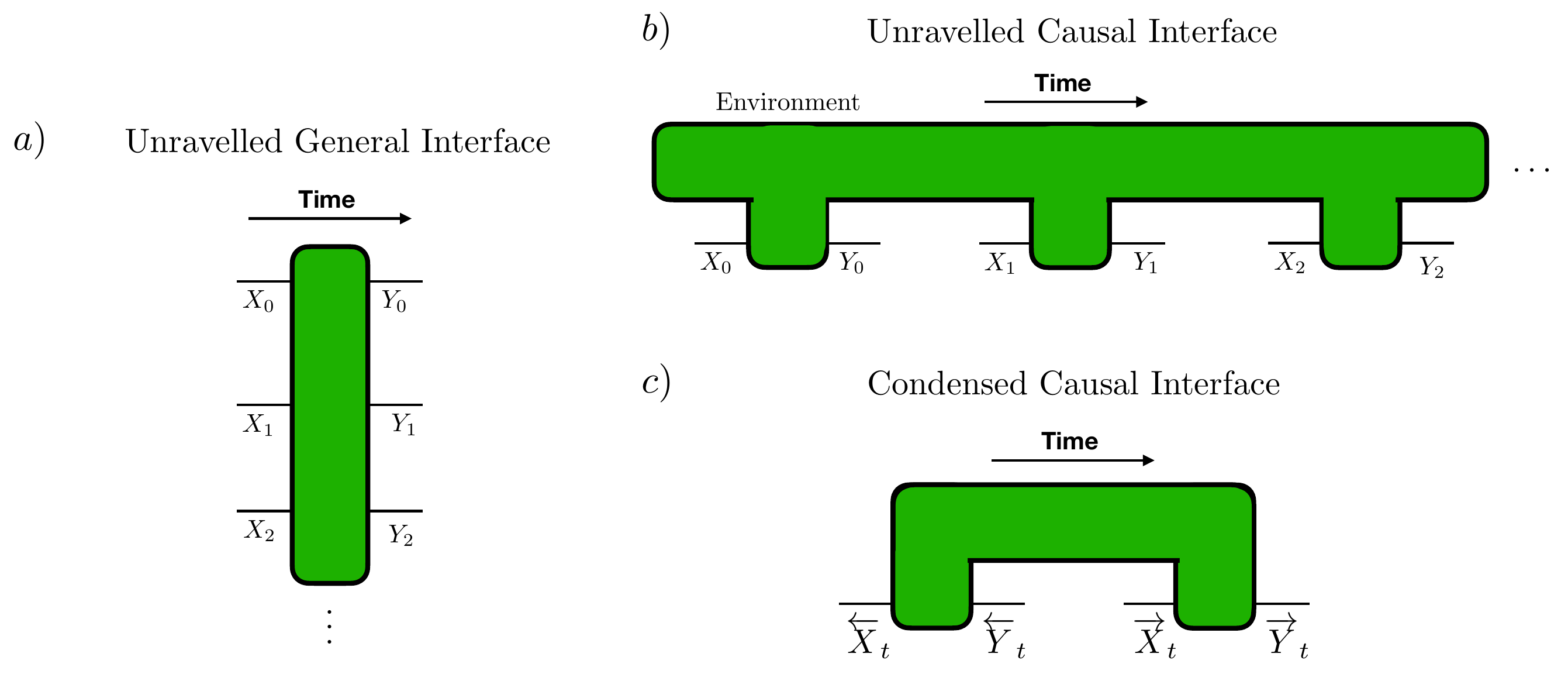}
\caption{\centering\small \emph{A general interface and two illustrations of a causal interface:} An unravelled general interface (a) takes a semi-infinite sequence of inputs $X_0,X_1,X_2 \cdots$ and stochastically transforms it to a semi-infinite sequence of outputs $Y_0,Y_1,Y_2 \cdots$, without any constraints on dependencies between inputs and outputs.  An unravelled causal interface (b) shows individual inputs $X_t$ and outputs $Y_t$ unravelled into a semi-infinite sequence in time.  The same object can be condensed (c) into a mapping from input pasts $\overleftarrow{X}_t$ and futures $\overrightarrow{X}_t$ to output pasts $\overleftarrow{Y}_t$ and futures $\overrightarrow{Y}_t$. }
\label{fig:Condensed_Interface} 
\end{figure*}

While causal interfaces are well-suited to describe actions and agents in perception-action loops and feedback\footnote{Interfaces can also be used to describe agents and feedback, as discussed in \autoref{app:Interfaces}}, this manuscript exclusively addresses \emph{feedforward} networks of interfaces, where past outputs cannot influence future inputs. 
In this context, the condition for causality can be expressed in terms of a probabilistic relation
\begin{align}    \Pr(\overleftarrow{Y}_t|\overleftarrow{X}_t,\overrightarrow{X}_t)=\Pr(\overleftarrow{Y}_t|\overleftarrow{X}_t).
\label{eq:nonanticipatory}
\end{align}
An interface $\mathcal{I}[Y|X]$ in a feedforward context is causal iff \autoref{eq:nonanticipatory} holds, and implies that the interface is anticipation-free or \emph{nonanticipatory}~\citep{barnett2015computational, rosasai}. If the nonanticipatory condition is not satisfied in a feedforward network, then the interface somehow utilizes future inputs before receiving them, violating causality.  Thus, causal interfaces are the only ones that are physically realizable in time.

\begin{figure*}
\centering
\includegraphics[width=.6\columnwidth]{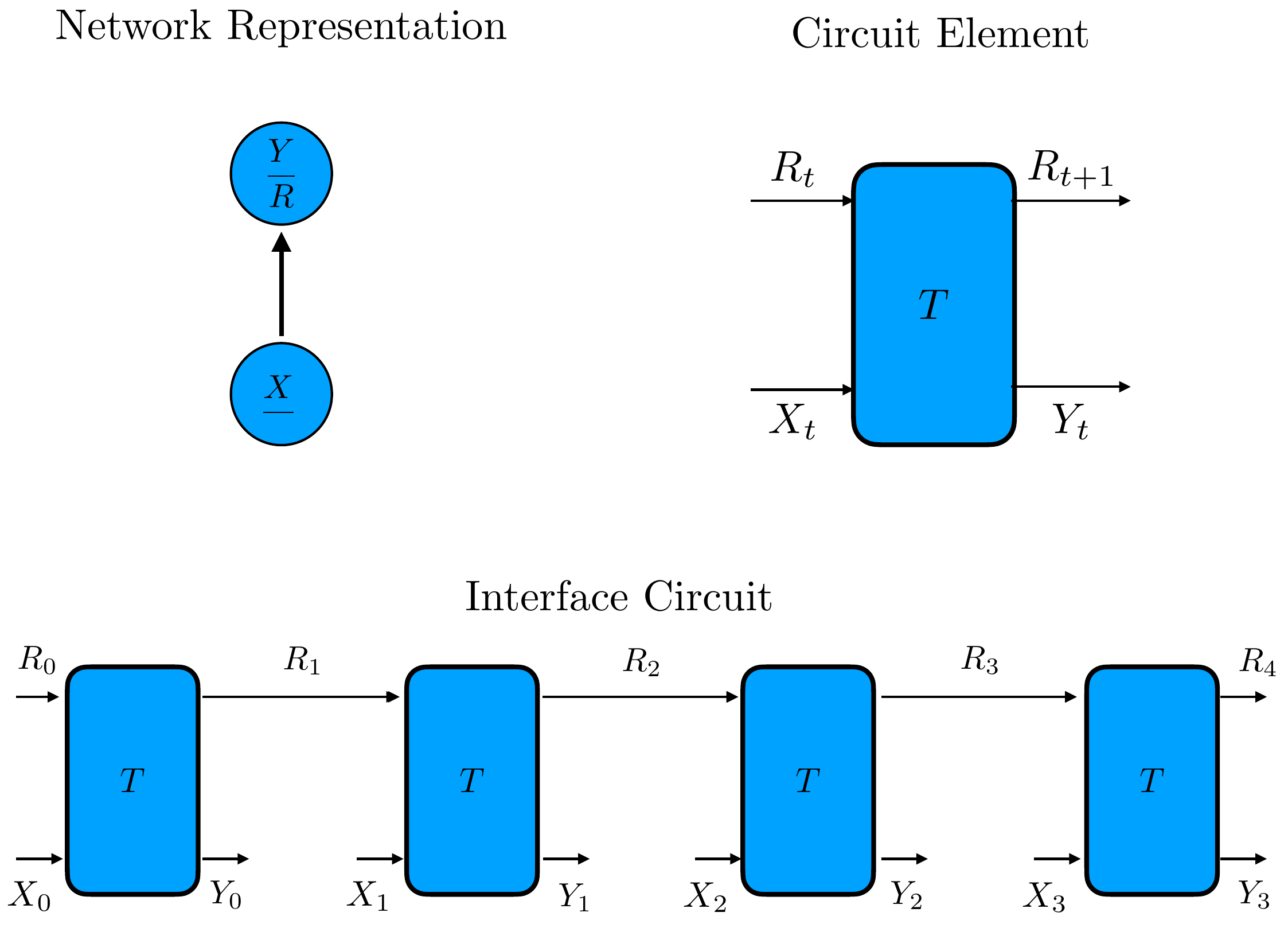}
\caption{\centering\small A transducer is a general model that transforms an input process $X$ to an output process $Y$ using a latent process $R$ as memory, which can be used to generate interfaces. The \emph{network representation} draws an arrow from the input process $X$ with unknown source ($\frac{X}{\null}$) to the output process $Y$ with latent variable $R$ ($\frac{Y}{R}$).  The \emph{circuit element} representation takes two inputs ($X_t$ and $R_t$) to two outputs ($Y_t$ and $R_{t+1}$).  The \emph{interface circuit} representation exhibits multiple timepoints resulting from applying the circuit element in series.}
\label{fig:Transducer_Circuit} 
\end{figure*}

\subsection{Transducers}

There are several models in the literature that can be used to implement causal interfaces, including Markov Decision Processes (MDP)~\citep{sutton1998introduction} and partially observed Markov Decision Processes (POMDP)~\citep{kaelbling1998planning}. These approaches are generalized by \emph{transducers}, which specify a repeating circuit relying on a latent variable that evolves in tandem with inputs and outputs to carry memory from past to future~(\autoref{fig:Transducer_Circuit}).

\begin{definition}\label{def:transducer}
    A \textbf{transducer} $T$ is a 
    tuple $\{\mathcal{X},\mathcal{Y},\mathcal{R},\{ T^{(y|x)}_{r \rightarrow r'} \}_{r,r' \in \mathcal{R},x \in \mathcal{X}, y \in \mathcal{Y}} \}$ consisting of an input alphabet $\mathcal{X}$, an output alphabet $\mathcal{Y}$, a latent memory alphabet $\mathcal{R}$, 
    and a stochastic kernel $T^{(y|x)}_{r \rightarrow r'}$ that determines the following probability:
    \begin{align}
    \label{eq:kernel}
        T^{(y|x)}_{r \rightarrow r'}= \Pr(Y_t=y, R_{t+1}=r' | X_t=x, R_t=r).
    \end{align} 
\end{definition}

Transducers can be used to implement interfaces by applying the kernel repeatedly and summing over all latent memory sequences as follows:
\begin{align}
\label{eq:Transducer_Process}
\Pr(Y_{0:t}=y_{0:t}|X_{0:t}=x_{0:t})=\sum_{r_{0:t+1}\in \mathcal{R}^{t+1}}\Pr(R_0=r_0)\prod_{i=0}^{t-1}T^{(y_i|x_i)}_{r_i \rightarrow r_{i+1}},
\end{align}
where $\Pr(R_0)$ is a prior distribution for $R_0$. 
We say that $T$ is a \emph{transducer presentation} of the interface $\mathcal{I}$ if there exists an initial distribution $\Pr(R_0)$ such that the transducer satisfies \autoref{eq:Transducer_Process} for all $\Pr(Y_{0:t}=y_{0:t}|X_{0:t}=x_{0:t})\in\mathcal{I}(Y|X)$.  

In line with past explorations of transducers~\citep{barnett2015computational,boyd2024thermodynamic,boyd2018thermodynamics}, here we define the stochastic kernel $T^{(y|x)}_{r \rightarrow r'}$ to be independent of the time-step $t$.  As shown in \autoref{fig:Transducer_Circuit}, this means the same operation $T$ is applied repeatedly, making it \textit{\textbf{mechanistically stationary}} (see~\autoref{app:MechanicallyStationary}). Transducers closely align with Hidden Markov Models, which implement stochastic processes~\citep{jurgens2021shannon} without considering inputs. The generality of transducers is confirmed by our first result, which identifies nonanticipatory interfaces with collections of conditional distributions that can be generated by transducers (see proof in~\autoref{app:Nonanticipatory Transducers}).

\begin{theorem}\label{thm:transducer_interface_equivalence}
A collection of conditional distributions of the form $\Pr(Y_{0:t}=y_{0:t}|X_{0:t}=x_{0:t})$ constitutes a causal interface iff it has a transducer presentation.
\end{theorem}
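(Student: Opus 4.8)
The statement is a biconditional, so the plan is to treat each implication separately. The reverse implication (a transducer presentation yields a causal interface) is the routine direction, whereas the forward implication (every causal interface admits a transducer presentation) is the crux and will require an explicit construction. Throughout, the single structural fact I will lean on is that $T^{(y|x)}_{r\to r'}$ defined in \eqref{eq:kernel} is a genuine stochastic kernel, i.e.\ $\sum_{y,r'} T^{(y|x)}_{r\to r'}=1$ for every $(x,r)$.

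For the $(\Leftarrow)$ direction, suppose the collection is generated by some transducer through \eqref{eq:Transducer_Process}. I would first check that it is a bona fide interface: each $\Pr(Y_{0:t}|X_{0:t})$ is normalized and the family is consistent under marginalizing the most recent output. Both follow from one telescoping computation. Marginalizing \eqref{eq:Transducer_Process} over $y_{s-1},\dots,y_t$ together with the trailing latent variables $r_s,\dots,r_{t+1}$ collapses each trailing factor to unity because the kernel is stochastic, leaving exactly $\Pr(Y_{0:t}|X_{0:t})$. Since the marginal of $Y_{0:t}$ computed from any longer input string $x_{0:s}$ never involves $x_{t:s}$, this simultaneously yields normalization, Kolmogorov consistency, and the nonanticipatory condition \eqref{eq:nonanticipatory}; hence the collection is a causal interface.

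For the $(\Rightarrow)$ direction, given a causal interface I would build a transducer whose latent memory is the full observed history. Take $\mathcal{R}$ to be the set of finite input-output pairs $(x_{0:t},y_{0:t})$, let $\Pr(R_0)$ be the point mass on the empty history, and let the kernel deterministically append the current symbols, $r'=r\cdot(x,y)$, while emitting $Y_t=y$ with the interface's one-step conditional
$$
\Pr(Y_t=y\mid Y_{0:t}=y_{0:t},\,X_{0:t+1}=x_{0:t+1})=\frac{\Pr\!\big(Y_{0:t+1}=(y_{0:t},y)\mid X_{0:t+1}=x_{0:t+1}\big)}{\Pr\!\big(Y_{0:t}=y_{0:t}\mid X_{0:t+1}=x_{0:t+1}\big)}.
$$
Because the transitions are deterministic in the memory, the sum over latent paths in \eqref{eq:Transducer_Process} collapses to the single consistent history, and the product of one-step conditionals telescopes. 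Invoking the chain rule on $\Pr(Y_{0:t}|X_{0:t})$ and then using \eqref{eq:nonanticipatory} to drop the dependence of each factor on future inputs, $\Pr(Y_i\mid Y_{0:i},X_{0:t})=\Pr(Y_i\mid Y_{0:i},X_{0:i+1})$, shows the product reconstructs $\Pr(Y_{0:t}|X_{0:t})$ exactly, so the transducer presents the interface.

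The hard part — and where causality is indispensable — is verifying that this one-step rule is a legitimate stochastic kernel. The denominator above conditions the past outputs $Y_{0:t}$ on an input string $x_{0:t+1}$ that already contains the \emph{future} input $x_t$; a priori it depends on $x_t$ and need not match any interface value indexed by a length-$t$ input string. The nonanticipatory condition \eqref{eq:nonanticipatory} is precisely what rescues the construction: it forces $\Pr(Y_{0:t}=y_{0:t}\mid X_{0:t+1}=x_{0:t+1})=\Pr(Y_{0:t}=y_{0:t}\mid X_{0:t}=x_{0:t})$, so the denominator is the length-$t$ interface value and is independent of $x_t$. This guarantees $\sum_y \Pr(Y_t=y\mid\cdots)=1$ and that the emitted distribution uses only information available up to time $t$, making the kernel genuinely time-local. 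I expect the only remaining technicality to be zero-probability histories, where the quotient is undefined; these can be assigned arbitrary emission distributions, since they carry no weight in \eqref{eq:Transducer_Process} and therefore leave the reconstructed interface unchanged.
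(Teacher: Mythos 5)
Your proposal is correct and follows essentially the same route as the paper: the hard direction uses the identical construction taking the latent memory to be the full input–output history $R_t=(\overleftarrow{X}_t,\overleftarrow{Y}_t)$, with the nonanticipatory condition ensuring the one-step conditionals are time-local, and the easy direction is the same marginalization argument (you telescope the kernel sums in the transducer product directly, while the paper phrases it as the latent state shielding past outputs from future inputs before summing out $R_t$ and $\overrightarrow{Y}_t$). Your explicit checks of kernel stochasticity and of zero-probability histories are refinements of details the paper's proof glosses over, not a different method.
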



\section{Networks of composed transducers}
\label{sec:composition_trasducers}

As an input-to-output map, transducers can be used as elementary components of more complex circuits.  Here, we investigate feedforward circuits, leaving recursive ones for future work.

\subsection{Composing two transducers}
\label{sec:transducersComposition}

Two transducers of the form $T= \{\mathcal{X},\mathcal{Y},\mathcal{R}, T^{(y|x)}_{r \rightarrow r'}\}$ 
and 
$U=\{\mathcal{X \times Y},\mathcal{Z},\mathcal{S},U^{(z|xy)}_{s \rightarrow s'} \}$ 
can be composed as shown in \autoref{fig:Composition_Circuit}, where both the input and output of $T$ are used to drive $U$ and generate the interface $\mathcal{I}[YZ|X]$. We formalize this in the next definition.

\begin{definition}
\label{def:Composition}
    The composition of transducers $T= \{\mathcal{X},\mathcal{Y},\mathcal{R}, T^{(y|x)}_{r \rightarrow r'}\}$ 
    and 
    $U=\{\mathcal{X \times Y},\mathcal{Z},\mathcal{S},U^{(z|xy)}_{s \rightarrow s'} \}$ is a new transducer $V = \{\mathcal{X},\mathcal{Y}\times\mathcal{Z},\mathcal{R}\times\mathcal{S}, V^{(yz|x)}_{rs \rightarrow r's'}\}$ with input alphabet $\mathcal{X}$, output alphabet $\mathcal{Y}\times\mathcal{Z}$, latent memory alphabet $\mathcal{R}\times\mathcal{S}$, and a stochastic kernel $V^{(yz|x)}_{rs \rightarrow r's'} \equiv  U^{(z|xy)}_{s \rightarrow s'} T^{(y|x)}_{r \rightarrow r'}$ that determines the following probability:
    \begin{align}
    V^{(yz|x)}_{rs \rightarrow r's'} = \Pr(Z_t=z,Y_t=y,R_{t+1}=r',S_{t+1}=s'|X_t=x,R_t=r,S_t=s).
\end{align}
\end{definition}

\begin{figure*}
\centering
\includegraphics[width=0.8\columnwidth]{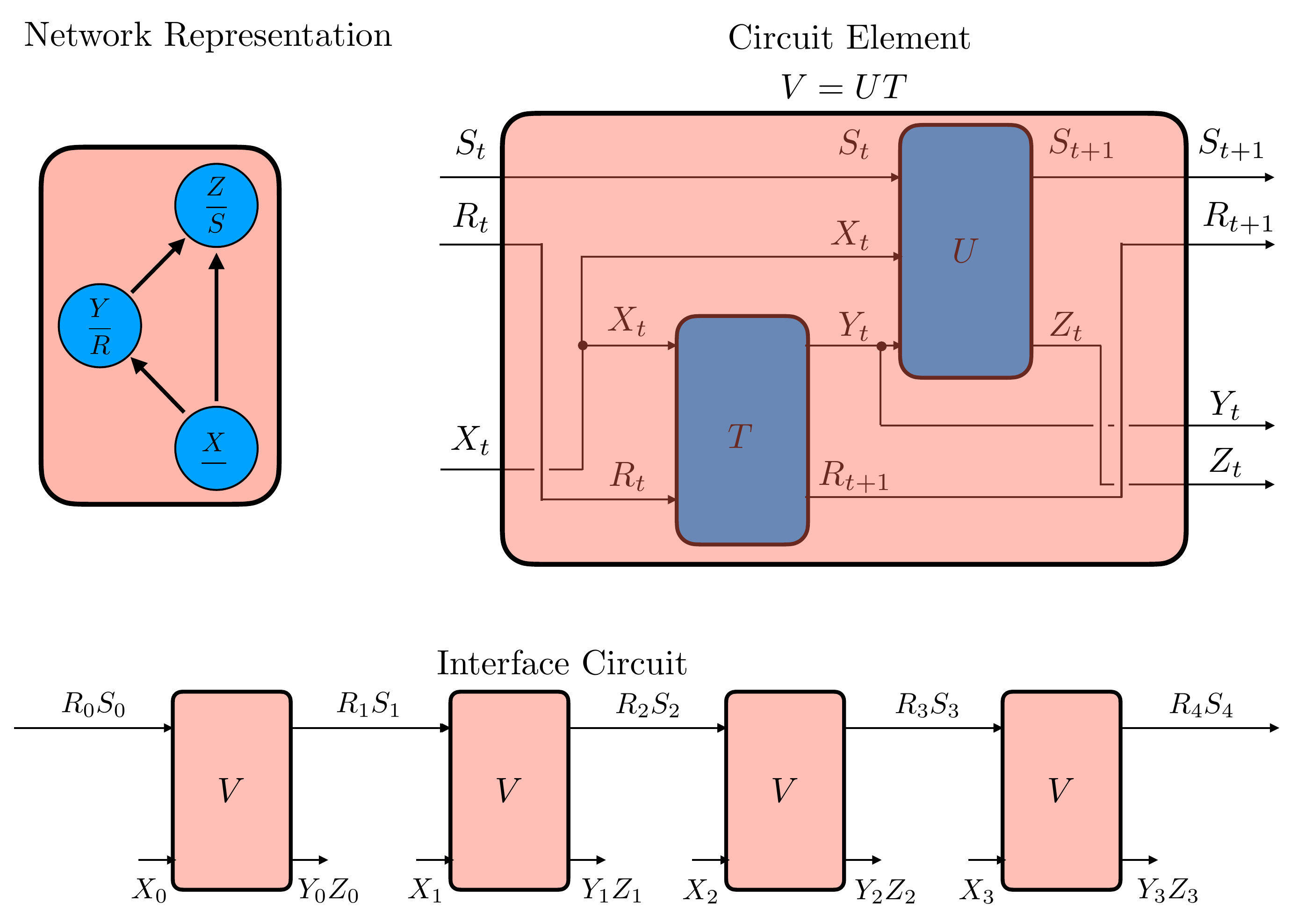}
\caption{\centering\small The network in the top left shows the most general way of composing two transducers, $T$ with latent states $R$, inputs $X$, and outputs $Y$, and U with latent states $S$, inputs $XY$, and outputs $Z$.  
A circuit element that implements this composite transducer is shown on the top right, with time proceeding from left to right. 
The composite transducer $V$, when applied in sequence at the bottom, produces the interface from $X$ to $YZ$.}
\label{fig:Composition_Circuit} 
\end{figure*}

For mathematical convenience, we introduce linear operators for the kernel.  For transducer $T$, the kernel has a linear operator on a vector space of the latent memory $\mathcal{R}$:
\begin{align}
    \hat{T}^{(y|x)} = \sum_{r,r'}T_{r \rightarrow r'}^{(y|x)} \mathbf{e}_{r'} \mathbf{e}_{r}^\intercal.
\end{align}
The notation $\mathbf{e}_{r} $ represents a column vector with a $1$ in the $r$th row and $0$s everywhere else, and $\mathbf{e}_{r}^\intercal $ is its transpose.  The linear operator notation allows us to evaluate the probability of an output sequence given a sequence of inputs via the product of operators: 
\begin{align}
\Pr(Y_{0:L}=y_{0:L}|X_{0:L}=x_{0:L})= \mathbf{1}^\intercal \hat{T}^{(y_{0:L}|x_{0:L})} \mathbf{p}_{R_0}.
\end{align}
Here, 
\begin{align}
    \hat{T}^{(y_{0:L}|x_{0:L})} \equiv\prod_{t=0}^{L-1}\hat{T}^{(y_t|x_t)},
\end{align}
is the composition of the transducer operators for the particular input-output sequence $y_{0:L}|x_{0:L}$, 
\begin{align} 
\mathbf{1}  \equiv \sum_{r} \mathbf{e}_r,
\end{align}
is the vector of all $1$s in the latent memory basis,
\begin{align}
\mathbf{p}_{R_0}   \equiv  \sum_{r} \Pr(R_0=r) \mathbf{e}_r
\end{align}
is the vector of initial memory probability.   
Using this notation, we can state our next result, whose proof is found in \autoref{app:kronecker}.

\begin{proposition}\label{res:kronecker}
    The operator associated to the composition of transducers is the Kronecker product of the linear operators of each transducer:
\begin{align}
\hat{V}^{(yz|x)}& =  \hat{T}^{(y|x)} \otimes \hat{U}^{(z|xy)} \nonumber
\\ & = \sum_{r,r',s,s'} T_{r \rightarrow r'}^{(y|x)} U_{s \rightarrow s'}^{(z|xy)} \mathbf{e}_{r'} \mathbf{e}_r^\intercal \otimes \mathbf{e}_{s'} \mathbf{e}_{s}^\intercal,
\end{align}
which operates on the joint vector space of the latents, which has dimension $|\mathcal{S}|\times|\mathcal{R}|$.
\end{proposition}

The composition shown in \autoref{def:Composition} inherits the properties of the Kronecker product, suggesting that this form of composition is fundamental.  This operation is \emph{associative but not commutative.}  Any other form of composition, such as the cascade product, series composition, or parallel composition discussed in \autoref{app:Other Composition Types}, can be built up using this fundamental operation and other tools discussed later in the manuscript.  This includes building larger networks through multiple compositions, marginalizing observed variables, and eliminating dependencies between variables, all of which we will cover.

\subsection{Special cases of transducer composition}
\label{sec:special-cases}

Using the composite transducer $V=UT$ as the baseline, we see that by restricting dependencies and making the transducers agnostic to certain inputs, we obtain various sub-classes of transducer composition that can be partially ordered by their restrictions (see~\autoref{fig:3_Observable_Dependencies}).  Some notable examples of these sub-classes include:
\begin{enumerate}[label=\alph*)]
    \item \emph{Series: }If the second transducer $U$ does not depend on the input $X$ of the transducer $T$, then $\hat{U}^{(z|xy)}=\hat{U}^{(z|y)}$ and the two transducers compose sequentially, with the composite transducer outputting both $U$'s and $T$'s outputs.
    \item \emph{Convergent: } If the first transducer $T$ is input-agnostic, i.e., $\hat{T}^{(y|x)}=\hat{T}^{(y)}$, then information from $X$ and $Y$ converges to produce $Z$.
    \item \emph{Divergent: } If the second transducer $U$ is agnostic to input $Y$, i.e., $\hat{U}^{(z|xy)}=\hat{U}^{(z|x)}$, then information from $X$ diverges to produce $Y$ and $Z$ in parallel.
\end{enumerate}

\begin{figure*}
\centering
\includegraphics[width=\columnwidth]{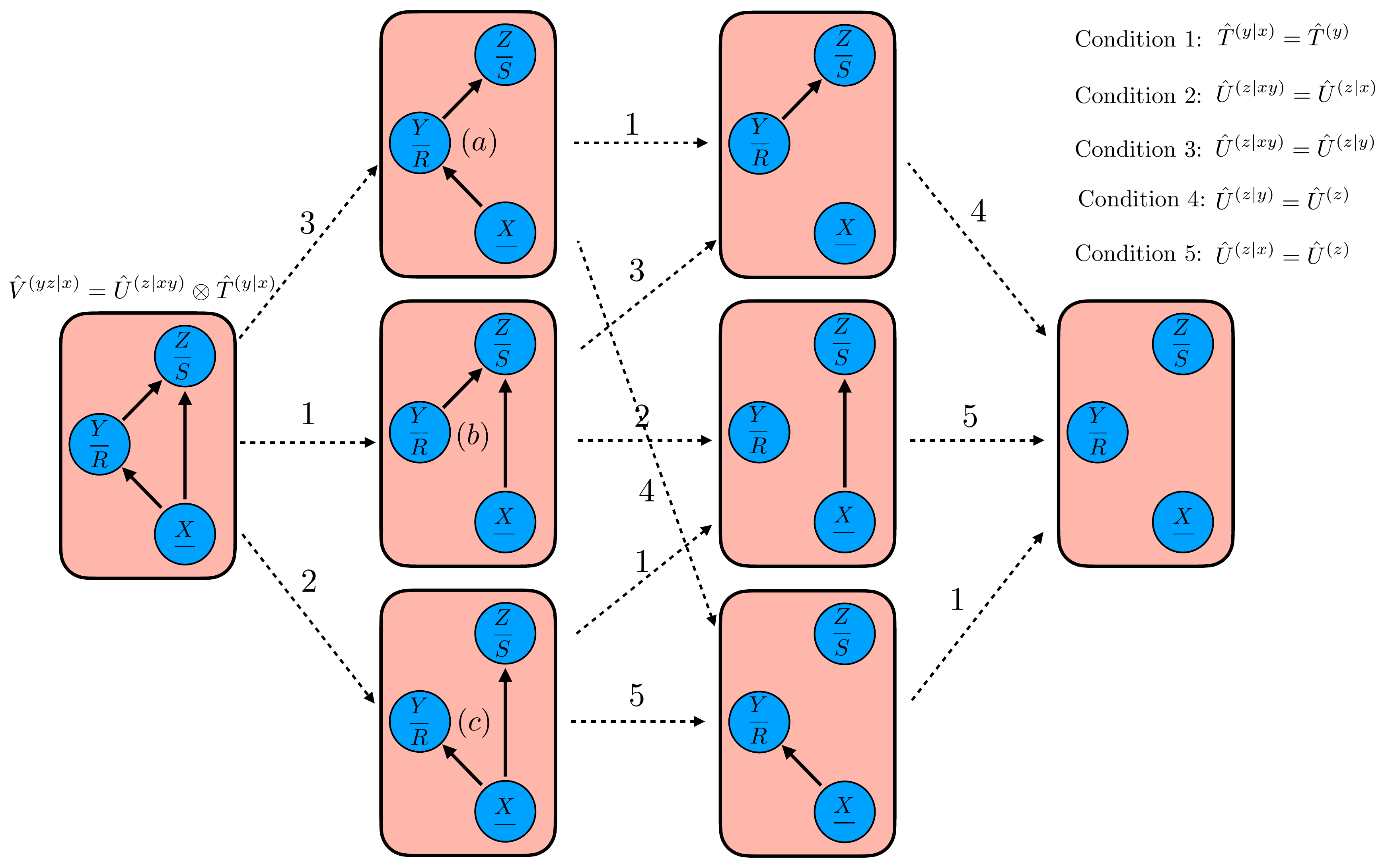}
\caption{
\centering
\small{Lattice of sub-classes of transducer composition, ordered according to the number of restrictions they consider. Pruning edges from left to right corresponds to limiting dependencies on the inputs --- these limitations are enumerated in conditions 1 through 5.  The arrows with dotted lines are labelled with the number of the condition that is necessary to prune each edge.  In this lattice, we highlight three notable cases: (a) series, (b) convergent, and (c) divergent composition.}}
\label{fig:3_Observable_Dependencies} 
\end{figure*}

To illustrate some of the concepts we have discussed so far, we consider a simple example in the familiar domain of gridworlds that considers transducers composed in series and in divergent manner.

\begin{example}
    \label{ex:gridworld}
    Consider an agent navigating a two-dimensional grid $\mathcal{G} = \{0, 1, \ldots, W-1\} \times \{0, 1, \ldots, H-1\}$, with action space $\mathcal{A} = \{u,d,l,r\}$ corresponding to movement in different directions. The state $s_t = (x_t, y_t)$ encodes the agent's position at time $t$.
    
    We can decompose the environment dynamics into three transducers (see~\autoref{fig:gridworld-transducers}):
    \begin{enumerate}
        \item \textbf{Horizontal position transducer} $T_x$: Tracks the $x$-coordinate, with latent state $R_x = \{0, \ldots, W-1\}$, updating via
        \[
            r_x^{(t+1)} = \begin{cases}
                \min(r_x^{(t)} + 1, W-1) & \text{if } a_t = r \\
                \max(r_x^{(t)} - 1, 0) & \text{if } a_t = l \\
                r_x^{(t)} & \text{otherwise}
            \end{cases}
        \]
        
        \item \textbf{Vertical position transducer} $T_y$: Tracks the $y$-coordinate analogously, with latent state $R_y = \{0, \ldots, H-1\}$.
        
        \item \textbf{Reward transducer} $T_r$: Receives position outputs $(x_t, y_t)$ from 
        $T_x$ and $T_y$, and emits a scalar reward $r_t \in \mathbb{R}$. The reward transducer can be used to represent non-Markovian reward functions by utilising a memory latent $M$, and is equivalent to a \emph{probabilistic reward machine}~\citep{dohmen2022inferring}.
    \end{enumerate}
     Notice that in some cases, the latent memory state of a transducer can also serve as its observable output (e.g., the positions of $T_x$ and $T_y$ are emitted directly as outputs $X$ and $Y$ respectively). The composition structure can be written as $T_{\text{env}} = T_r \circ (T_x \parallel T_y)$, where $\parallel$ denotes parallel composition and $\circ$ denotes sequential composition. 
\end{example}

\begin{figure}[t]
    \centering
    \includegraphics[width=\columnwidth]{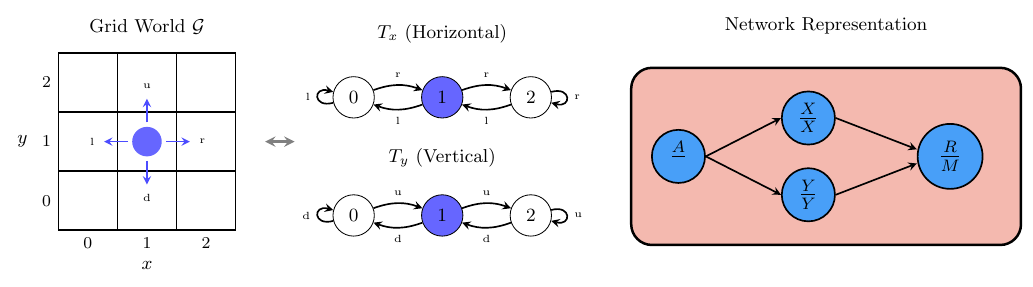}
    \caption{A simple $3 \times 3$ grid world environment decomposed into parallel position transducers feeding a sequential reward
    transducer. The horizontal ($T_x$) and vertical ($T_y$) position transducers operate independently based on the action stream, while the reward transducer $T_r$ combines their outputs to produce scalar rewards based on the position history of the agent.}
    \label{fig:gridworld-transducers}
    \end{figure}

Note that in the figures of one transducer or two composed transducers such as in Figure~\ref{fig:Composition_Circuit}, the outputs $Y$ and $Z$ were conditional on the input $X$, meaning that the composed transducers generate the conditional distribution $\Pr(YZ|X)$.  However, this is functionally equivalent to considering $X$ as the result of a memoryless IID input-agnostic transducer.

\subsection{Transducer networks}

Let us now use \autoref{def:Composition} to build  networks of transducers. For this, let us consider a collection of $N$ transducers $\{T(n)\}_{n \in \{0, \cdots ,N-1\}}$, whose respective latent memory alphabets are denoted by $\mathcal{R}(n)$ and their observable alphabets by $\mathcal{X}(n)$. 
Since outputs flow through the network, these variables may function as both inputs and outputs of the transducers, depending on the context. 
For simplicity, here we focus on networks without loops, leaving the treatment of feedback for future work.

For convention, we assume that the numbering of the transducers is such that it respects their causal ordering, meaning that $T_{n}$ can only influence $T_{n'}$ if $n' > n$, which guarantees \emph{no feedback} in the system. 
Thus, we assume that $\mathcal{X}(n)$ serves as both the output of transducer $n$ and part of the input of transducers $n+1$ onwards. 
Most generally, the linear operator of a transducer $T(n)$ depends on all previous observable outputs and can be expressed as $ \hat{T}(n)^{(x(n)|x(0:n))}$
where $x(0:n) \equiv \{x(0), \cdots, x(n-1)\}$. These assumptions lead to the next, which is a direct consequence of \autoref{res:kronecker}.

\begin{proposition}
    Under the above conditions, 
    the resulting joint transducer operator of the composition of $N$ transducers $\otimes_{n=0}^{N-1} T_n$ is given by
    \begin{align}
    \hat{T}^{(x(0:N))}(0:N)= \bigotimes_{n=0}^{N-1}\hat{T}(n)^{(x(n)|x(0:n))},
    \end{align}
    where $\bigotimes$ denotes the Kronecker product.
\end{proposition}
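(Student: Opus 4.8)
The plan is to prove the statement by induction on the number of transducers $N$, using \autoref{res:kronecker} as the inductive engine and the associativity of the Kronecker product (the preceding Corollary) to collapse the nested pairwise compositions into a single product. The statement is flagged as a direct consequence of \autoref{res:kronecker}, so essentially all the work is in setting up the induction cleanly.

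For the base case $N=1$ there is nothing to compose: the joint operator is just $\hat{T}(0)^{(x(0)|x(0:0))}$, where $x(0:0)$ is the empty history, which matches the right-hand side. For the inductive step I would assume the claim holds for $N-1$, so that the composite of $T(0),\dots,T(N-2)$ has operator $\hat{W} \equiv \bigotimes_{n=0}^{N-2}\hat{T}(n)^{(x(n)|x(0:n))}$, acting on the product latent space $\mathcal{R}(0)\times\cdots\times\mathcal{R}(N-2)$ and producing the joint observable history $x(0:N-1)=\{x(0),\dots,x(N-2)\}$.

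The key move is to recognise that folding $T(N-1)$ into this composite is an instance of \autoref{def:Composition}: the accumulated transducer $W$ plays the role of $T$, whose combined observable (trivial external input together with outputs $x(0:N-1)$) is exactly the input alphabet $\mathcal{X}_W\times\mathcal{Y}_W$ driving $T(N-1)$, which plays the role of the downstream $U$. Because the network is feedforward and causally ordered, $T(N-1)$ depends only on outputs already produced by $W$, so this composability precondition is satisfied. Applying \autoref{res:kronecker} to the pair gives the composite operator $\hat{T}(N-1)^{(x(N-1)|x(0:N-1))}\otimes\hat{W}$; substituting the inductive hypothesis for $\hat{W}$ yields a nested Kronecker product over all $n=0,\dots,N-1$, and associativity flattens it into the single product $\bigotimes_{n=0}^{N-1}\hat{T}(n)^{(x(n)|x(0:n))}$, completing the induction.

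The main obstacle is bookkeeping rather than any deep idea: one must check at each stage that the full observable history $x(0:n)$ produced by the partial composite is precisely the input driving the next transducer, and that the ordering of tensor factors is handled consistently (\autoref{res:kronecker} places the downstream operator on the \emph{left}, so the induction naturally produces factors in descending index order). Since the latent state space is a Cartesian product and the factor ordering is merely a convention fixed by the chosen basis, this reversal is purely notational and does not affect the claim; the only genuine content is verifying the composability condition of \autoref{def:Composition} at each step, which follows directly from the causal-ordering assumption.
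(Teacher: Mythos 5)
Your proof is correct and takes essentially the same route as the paper: the paper gives no separate argument, asserting the proposition as a ``direct consequence'' of \autoref{res:kronecker}, and your induction (pairwise composition via \autoref{def:Composition} plus associativity to flatten) is exactly the formalization of that iterated application. Your observation that \autoref{res:kronecker} places the downstream factor on the left, so the flattened product appears in descending index order up to a basis-relabelling, is a fair clarification of a notational point the paper leaves implicit.
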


\begin{figure*}
\centering
\includegraphics[width=\columnwidth]{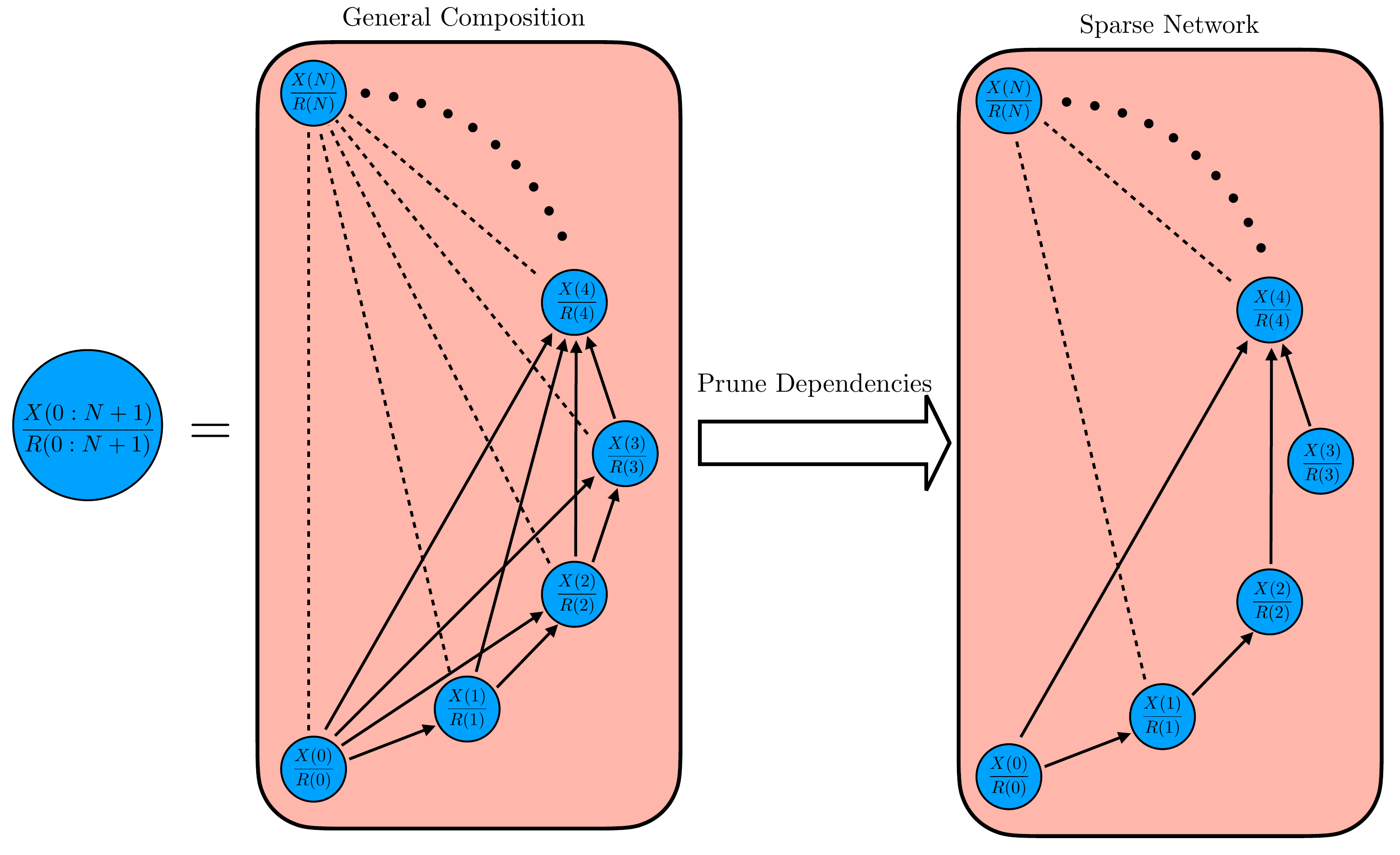}
\caption{
\centering
\small{A network of $N+1$ transducers can be expressed as a single transducer (left), or as a compositional network (middle).  In this fully connected composition, we see that the $n$th transducer, with random variables $\frac{X(n)}{R(n)}$ depends on all prior outputs $X(0:n)=\{X(0), \cdots, X(n-1)\}$.  This composition allows us to examine sparse networks (right), where dependencies are pruned between elements.}}
\label{fig:N_Observable_Dependencies} 
\end{figure*}

The condition of having no feedback in the dependency of transducers means that information proceeds from one layer of the network system to the next, and the interdependency can be expressed as a directed acyclic graph (DAG), as shown in \autoref{fig:N_Observable_Dependencies}. 
Graphically, the network representation of a composition of multiple transducers consists of a set of nodes $n$, each labelled $\frac{X(n)}{R(n)}$, where $X(n)$ is the random variable for the output process of node $n$, and $R(n)$ is the latent memory process.  If the transducer at node $n$ has a directed edge to the transducer at node $n'$, it implies that transducer $n'$ depends on $T(n)$'s output process $X(n)$. 
Thus, the arrows from node $n$ determine which latent variables and outputs are affected by the output process $X(n)$.  For instance, \autoref{fig:Composition_Circuit} shows arrows going from $X|$ to $Y|R$ and $Z|S$, meaning that the current outputs $Y_t$ and $Z_t$ and the next latent states $R_{t+1}$ and $S_{t+1}$ are dependent on the current output $X_t$.  This is shown explicitly in the circuit diagram next to the network representation of the composed transducer.

\begin{example}
    \label{ex:multi-objective-transducer}
     Consider settings with complex or multi-objective reward structures, where the reward is better understood as the aggregate of distinct interacting feedback loops. For instance, suppose that the agent's task is to collect resources that stochastically spawn at random grid locations and deliver them to a central depot, but has a finite carrying capacity (see~\autoref{fig:gridworld-transducers_extended}). The global reward function can be factored into two interacting transducers: a collection transducer $T_c$ and a delivery transducer $T_d$. The collection transducer takes as input the location of the agent as well as the output of an inventory transducer $T_I$. When the agent's position coincides with a resource, $T_c$ emits a small collection reward $r_c$ if the agent's inventory is not full. The delivery transducer takes the inventory state as an input alongside the agent's position and emits a larger delivery reward $r_d$ depending on how many resources were delivered. By factoring the reward structure in this way, an agent may be able to learn separate policies for efficient resource collection and ultimate delivery to the depot and compose them to solve the task.
\end{example}

As shown in \autoref{fig:Composition_Circuit}, it is also possible for nodes to leave latent variables, which is denoted as $\frac{X(n)}{\null}$.  This indicates that this node functions as an input to the transducer to which it has arrows.

If a subset of nodes $\mathcal{N}' \subseteq \{0,\cdots , N-1 \}$ has directed edges to the node $n$, that means that the corresponding linear operator has simplified dependencies as follows:
\begin{align}
    \hat{T}(n)^{(x(n)|x(0:n))}=\hat{T}(n)^{(x(n)|x(\mathcal{N}'))},
\end{align}
where $x(\mathcal{N}')\equiv \{x(n')|n' \in \mathcal{N}'\}$.  \autoref{fig:N_Observable_Dependencies} shows an example where the dependencies between nodes are restricted by choosing a subset of all possible antecedent processes.  This includes the possibility where there are no antecedent processes, such as for the node $\frac{X(3)}{R(3)}$.  This node only creates information and feeds it into the network in \autoref{fig:N_Observable_Dependencies}, but does not receive any information from prior nodes.  Such a node without antecedents can be conditioned out, and used as an input process $\frac{X(3)}{\null}$.

\begin{figure}[t]
    \centering
    \includegraphics[width=0.9\columnwidth]{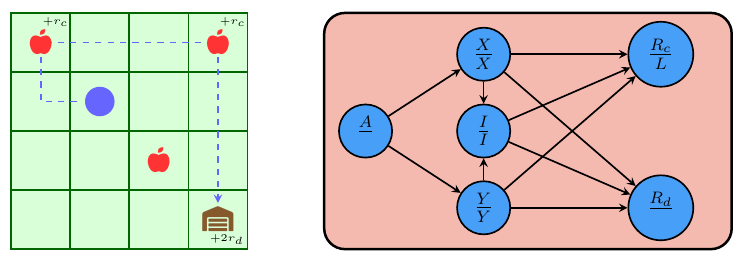}
    \caption{Extended gridworld example with multi-objective reward structure. 
    \textbf{Left:} A $4 \times 4$ grid environment in which an agent (blue circle) must collect resources (yielding reward $+r_c$) that spawn at random locations and deliver them to a central depot (yielding reward $+2r_d$). The dashed line shows a sample trajectory.
    \textbf{Right:} A compositional transducer network depicting the scenario described in Example \ref{ex:multi-objective-transducer}. The action transducer $T_A$ feeds horizontal ($T_X$) and vertical ($T_Y$) position transducers, which in turn feed an inventory transducer $T_I$ tracking the agent's carrying state. The collection reward transducer $T_{R_c}$ and delivery reward transducer $T_{R_d}$ each depends on the agent's position and inventory, factorizing the global reward into two separate components. In addition, the collection transducer tracks locations $L$ on the grid as a latent so that it knows when to output rewards for resource collection.}
    \label{fig:gridworld-transducers_extended}
    \end{figure}

In summary, using sequences of transducer composition one can build any feed-forward network of transducers. 
\emph{Any} set of feedforward dependencies between processes is accessible through this formalism.

\begin{example}
\label{ex:mars_rover}
As a running example, consider an agent controlling a \emph{solar-powered Mars rover}. At each time $t$, the agent inputs motor commands $X_t$, and the environment returns observations $Y_t = (I_t, V_t)$ consisting of a camera image $I_t$ to monitor the terrain in front of it and a battery voltage reading $V_t$. The voltage reading $V_t$ is dependent on the latent battery state $B_t$, which is influenced by the motor drain, and by an exogenous weather process that modulates the illuminance $L_t$ measured by a photometer as a result of solar radiation incident on the rover, which is not controlled by the agent.

A monolithic view would suggest a single, complex mapping $X \to Y$ that uses a single latent memory state $R_t$ to jointly encode the navigation, atmospheric, and battery states. From a learning or inference perspective, this can be inefficient, as the agent is required to predict a high-dimensional input stream using a representation that entangles multiple weakly-coupled mechanisms within a single latent space.

However, using the composition machinery described above, the system can be represented as a composition of three smaller transducers, each tracking a distinct subsystem but producing the same overall interface:
\begin{enumerate}
    \item \textbf{Navigation} $T_{\mathrm{nav}}$: Maps motor inputs $X$ to camera images $I$ via position latents $P$.
    \item \textbf{Weather} $T_{\mathrm{sol}}$: An input-agnostic transducer that measures illuminance $L$ via weather state latents $W$.
    \item \textbf{Power} $T_{\mathrm{pwr}}$: A convergent transducer that integrates motor drain $X$ and illuminance $L$ to output voltage $V$ via the latent battery state $B$.
\end{enumerate}

\begin{figure*}
    \centering
    \label{fig:Mars_Rover}
    \includegraphics[width=0.9\columnwidth]{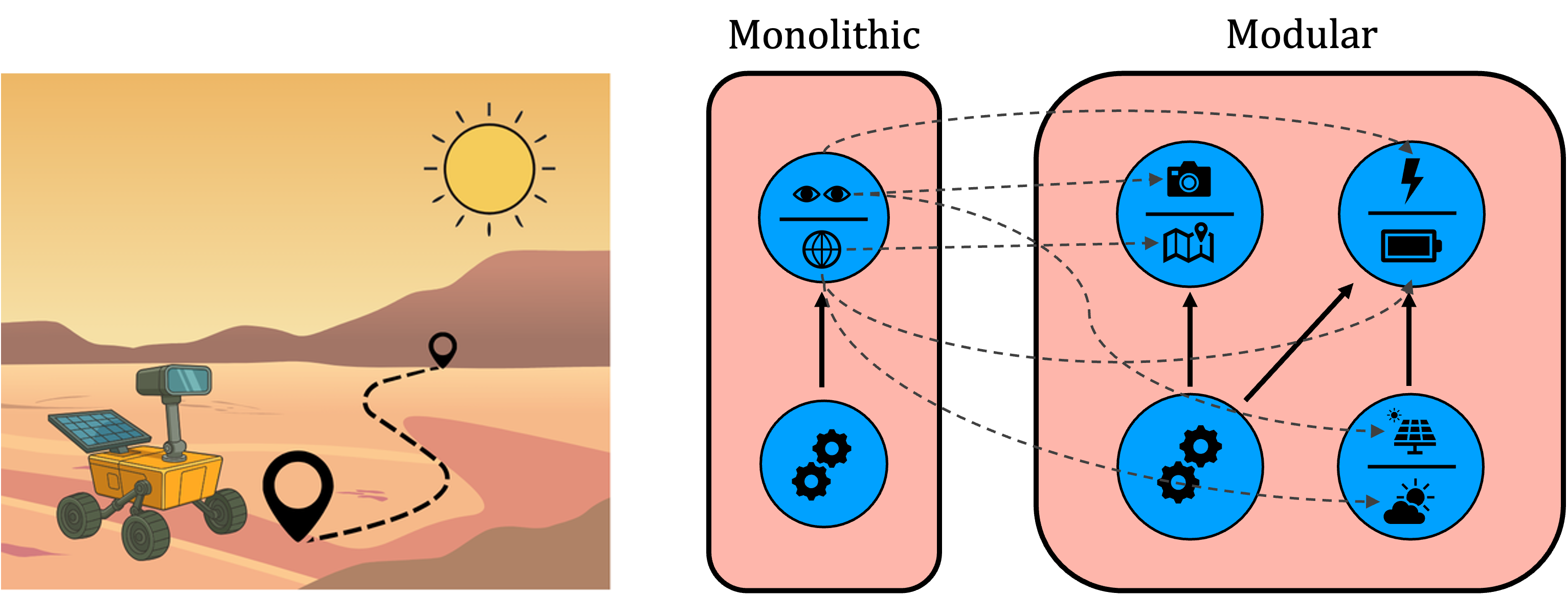}
    \caption{
    \centering
    \small{Two different representations of a world model for the Mars rover example. Middle: a monolithic world model representing the mapping from motor outputs $X$ to observations $Y$ using a single, entangled latent variable $R$. Right: a compositional model representing the mapping in terms of coupled modules, including a motor output module, a camera image model with position latents, a battery voltage model with battery state latents, and a solar radiation model with weather state latents.}}
\end{figure*}
\end{example}

\subsection{Related work on transducer composition}
\label{sec:related_work_trasnducer_comp}
The notion of a transducer network naturally generalises factored MDPs~\citep{boutilier2000stochastic,guestrin2003efficient}, which represent the environment's dynamics via a Bayesian network (see \autoref{ex:gridworld}). Transducer networks also generalise factored POMDPs~\citep{poupart2005exploiting} that extend this to partially observable scenarios, decentralised POMDPs~\citep{bernstein2002complexity} considering multiple agents, action-constrained~\citep{altman2021constrained} and stochastic action-set \citep{boutilier2018planning} MDPs where actions can be masked, contextual MDPs~\citep{hallak2015contextual} whose states include contextual information, and reward machines~\citep{icarte2018using,dohmen2022inferring} considering structured reward functions (see \autoref{ex:multi-objective-transducer}). 
The power of transducer networks is their ability to treat environment physics, agent policies, and task logic as separate dynamical modules that interact through structured input-output subspaces. 
This generality allows them to, for instance, express factored environments with factored rewards, as showcased in \autoref{ex:mars_rover}. 

It is important to note that the study of transducers and their composition has a long history in computational linguistics, based on the key result by~\citet{SCHUTZENBERGER1961185} that the composition of two transducers yields another transducer.
In the linguistics literature, transducers are usually assumed to have a finite set of (latent) states and are thus referred to as finite-state transducers (FSTs).
The transitions between states are weighted by elements from a semiring, and need not be stochastic.
One basic form of composition of FSTs is \emph{parallel composition}, where two FSTs transducing distinct input processes to output processes are combined into a single larger FST~\citep{reape-thompson-1988-parallel, mohri-1997-finite}. 
When the output set of one FST is equal to the input set of another, the two can also be composed \emph{in series} by piping the output of the first to the input of the second, while the output of the second becomes the output of the resulting composed transducer~\citep{mohri2002weighted}. Note that this is an extension of the classic algorithm for intersecting finite state automata (FSTs without outputs) as described by~\citet{hopcroft1979introduction}.
A third form of transducer composition that relates closely to ours appears in the context of algebraic automata theory in the form of \emph{cascade products} of FSTs~\citep{maler2010krohnrhodes, maler1993decomposition}, where the output of the first transducer is its latent state, and the input of the second is the combination of the input and output of the first. This type of composition is commonly used in the decomposition of finite state automata~\citep{krohn-rhodes-1965-algebraic, eilenberg1976automata, maler1993decomposition}.

Our formulation of transducer composition generalizes the aforementioned types of composition, meaning each of them can be embedded in ours (see \autoref{app:Other Composition Types}). 
Furthermore, our definition of transducers does not require the number of states to be finite, whereas we impose that the transitions are stochastic (see \autoref{def:transducer}).

\section{Decomposing transducers through factorization}
\label{sec:decomp_framework}

We now investigate under what conditions the composition of transducers, as established by  \autoref{def:Composition}, can be inverted.  We call the process of inverting composition \emph{factorization}, because it results in expressing the original transducer as a product of ``prime'' components. 

In the following, we first explore the problem of transducer factorisation in scenarios where latent processes are known (\autoref{sec:factoring_latents}), and then address the question of how to do this in situations where latents are not accessible (\autoref{sec:factoring_nolatents}).

\subsection{Factoring with latents}
\label{sec:factoring_latents}

Our starting point for this procedure is a collection of random variables with no predefined causal ordering.  Instead, we just consider random variables categorised into two classes: observable variables, indexed by the set $\mathcal{I}$, and latent variables, indexed by the set $\mathcal{I}'$. Hence, each $i \in \mathcal{I}$ has a corresponding observable random variable $X(i)$, and each $i' \in \mathcal{I}'$ has a corresponding latent random variable $R(i')$. The overall process is represented by a single node $\frac{X(\mathcal{I})}{R(\mathcal{I}')}$, where $X(\mathcal{I})=\{X(i)|i \in \mathcal{I}\}$ and $R(\mathcal{I}')=\{R(i')|i' \in \mathcal{I}'\}$.  The goal is to cluster subsets of $\mathcal{I}$ and $\mathcal{I}'$ into smaller nodes, which will correspond to elementary transducers.  This can be done using our understanding of the statistics of single transducers, as explained next.

\subsubsection{Identifying transduction via Intransducibility}

\begin{figure*}
\centering
\includegraphics[width=.6\columnwidth]{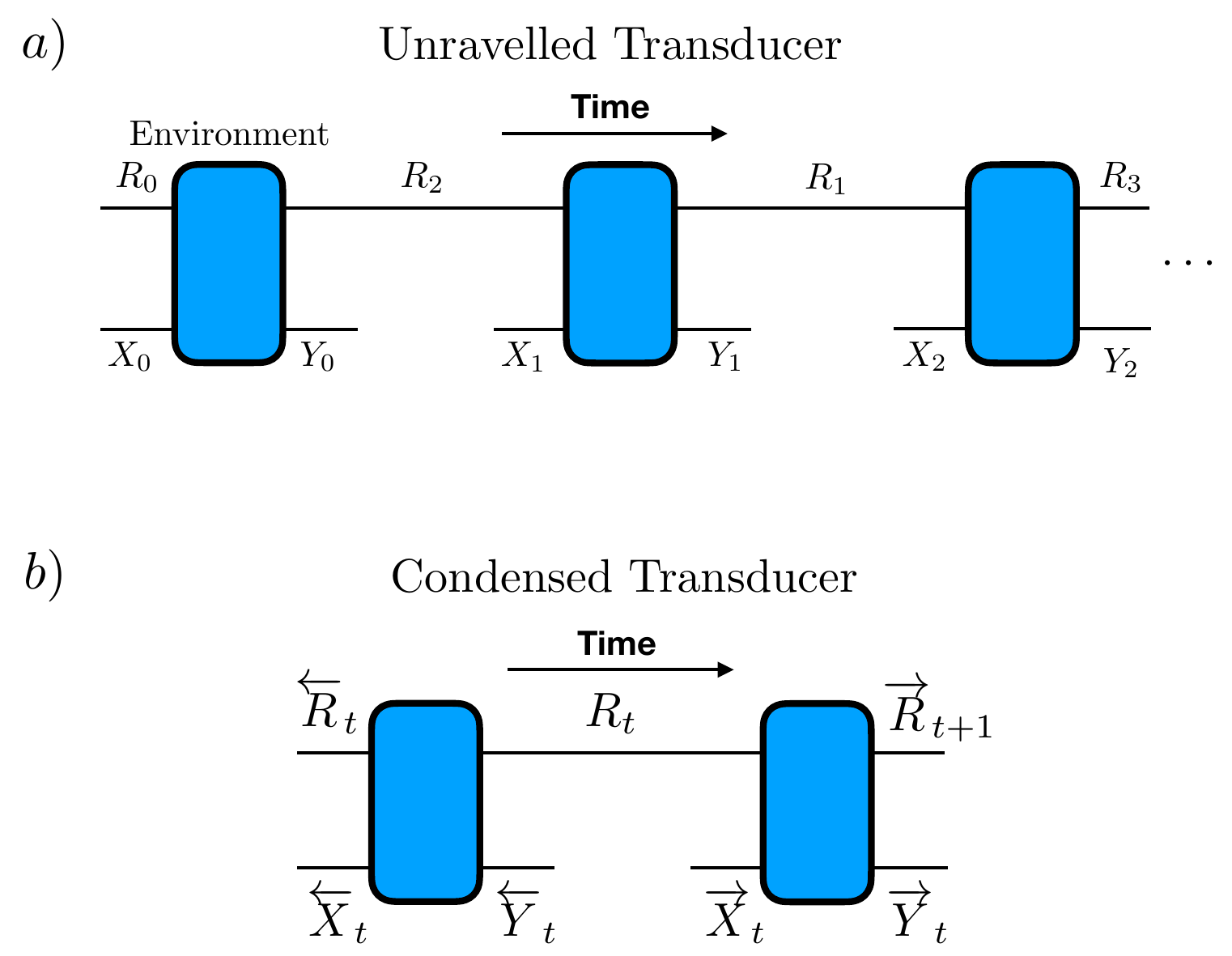}
\caption{\centering\small \emph{Two illustrations of a transducer:} An unravelled transducer (a) shows individual inputs $X_t$, outputs $Y_t$, and latent states $R_t$ unravelled into a semi-infinite sequence.  The same object can be condensed (b) into a mapping from input pasts $\overleftarrow{X}_t$, input futures $\overrightarrow{X}_t$, and latent state pasts $\overleftarrow{R}_t$ to output pasts $\overleftarrow{Y}_t$, output futures $\overrightarrow{Y}_t$, latent state futures $\overrightarrow{R}_t$ via the present latent state $R_t$.}
\label{fig:Condensed_Transducer} 
\end{figure*}

As a first step towards decomposing transducers, we address the following question: given a joint distribution over processes $\Pr(X,Y,R)$, when can $X$ be interpreted as the input of a transducer with latent states $R$ and outputs $Y$?  

To study this, we will use the following shorthand notation:
\begin{itemize}
    \item $X$ \textbf{can} be transduced to $Y$ via $R$:  $X\Rrightarrow Y|R$.
    \item $X$ \textbf{cannot} be transduced to $Y$ via $R$:  $X\not\Rrightarrow Y|R$.
\end{itemize}
We next introduce an information quantity that will play a key role in answering this question. 
\begin{definition}
The \textbf{Intransducibility} of a joint distribution over processes $\Pr(X,Y,R)$ is 
\begin{align}
f(X \Rrightarrow Y|R) \equiv \sum_{t = 0}^{\infty}
I[\overrightarrow{R}_{t+1},\overrightarrow{Y}_t;\overleftarrow{Y}_t,\overleftarrow{R}_t,\overleftarrow{X}_t|\overrightarrow{X}_t,R_t].
\label{eq:transduction_violation}
\end{align}
\end{definition}
Above, $I[\overrightarrow{R}_{t+1},\overrightarrow{Y}_t;\overleftarrow{Y}_t,\overleftarrow{R}_t,\overleftarrow{X}_t|\overrightarrow{X}_t,R_t]$ measures to what degree the past reveals information about future latent states and outputs independent of the present latent state and future inputs.  As illustrated in the architecture of any transducer (shown in 
\autoref{fig:Condensed_Transducer}), the present latent state and future inputs shield past and future, making direct information sharing between them impossible.  Our next result shows that if $X,Y,R$ are the components of a transducer then the Intransducibility needs to be zero (proof in~\autoref{app:when_trans}).
\begin{lemma}\label{res:when_trans}
$X$ can be transduced to $Y$ via $R$  ($ X \Rrightarrow Y|R$) if and only if the Intransducibility is zero ($f(X \Rrightarrow Y|R) =0$). 
\end{lemma}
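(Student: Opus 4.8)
The plan is to reduce the claim to the per-time-step characterization already recorded in \autoref{eq:transducer_condition}, combined with the non-negativity of conditional mutual information. By \autoref{eq:transduction_violation}, the intransducibility $f(X \Rrightarrow Y|R)$ is literally a sum over all times $t$ of the same conditional mutual information terms $I[\overrightarrow{R}_{t+1},\overrightarrow{Y}_t;\overleftarrow{Y}_t,\overleftarrow{R}_t,\overleftarrow{X}_t|\overrightarrow{X}_t,R_t]$ that appear in the transducer condition, so the two notions are linked termwise and the argument should be short.

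First I would establish the forward direction. Assuming $X \Rrightarrow Y|R$, i.e.\ that the joint process $\Pr(X,Y,R)$ is realizable by a transducer with latent state $R$, I would invoke the characterization attributed to \cite{rosasai}, which states that this realizability holds iff \autoref{eq:transducer_condition} is satisfied for \emph{every} $t$. Each summand in \autoref{eq:transduction_violation} is then zero, so summing over $t$ gives $f(X\Rrightarrow Y|R)=0$ immediately.

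For the converse, I would use that every summand in \autoref{eq:transduction_violation} is a conditional mutual information and hence non-negative. A series of non-negative terms can sum to zero only if each term vanishes individually; therefore $f(X\Rrightarrow Y|R)=0$ forces $I[\overrightarrow{R}_{t+1},\overrightarrow{Y}_t;\overleftarrow{Y}_t,\overleftarrow{R}_t,\overleftarrow{X}_t|\overrightarrow{X}_t,R_t]=0$ for all $t$. This is exactly the condition in \autoref{eq:transducer_condition}, so the cited characterization again yields $X \Rrightarrow Y|R$, completing the equivalence.

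The proof is essentially bookkeeping once \autoref{eq:transducer_condition} is in hand, so the only point that warrants a sentence of care is the infinite sum: I would note that, because every term is non-negative, the partial sums are monotone and the series either converges to a finite nonnegative value or diverges to $+\infty$. In either case a zero total is possible only when all terms vanish, so the step ``the sum is zero iff each term is zero'' is valid without any delicate convergence analysis. The substantive content therefore resides in the cited result \autoref{eq:transducer_condition} rather than in the present argument, and I do not expect a genuine obstacle here.
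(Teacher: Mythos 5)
Your proposal is correct and follows essentially the same argument as the paper: both directions reduce to the characterization in \autoref{eq:transducer_condition} (due to \cite{rosasai}) applied termwise, combined with the non-negativity of conditional mutual information to conclude that the infinite sum vanishes iff every summand does. Your extra remark on the monotone partial sums is a harmless clarification, not a different route.
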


This lemma provides a clear criterion for deciding when a set of processes constitutes a transducer, which will be the basis of the decomposition procedure presented next.  In essence, zero Intransducibility from $X$ to $Y|R$ means that the $Y|R$ node is downstream from $X$ in a feedforward network.

While Intransducibility is a measure designed to identify the validity of input-output channels, it can also be used in the restricted context of generating a process \citep{ellison2011information, boyd2018thermodynamics}.  Rather than addressing a joint process of observables and $Y$ and latents $R$, conditioned on input observables $X$, we can simply consider whether $R$ can function as the latent states of an HMM that generates $Y$.  In this case, the Intransducibility measure reduces to
\begin{align}
f(\varnothing \Rrightarrow Y|R) \equiv \sum_{t = 0}^{\infty}
I[\overrightarrow{R}_{t+1},\overrightarrow{Y}_t;\overleftarrow{Y}_t,\overleftarrow{R}_t|R_t].
\end{align}
$\varnothing \Rrightarrow$ indicates that there is no driving process.  This can be applied to HMM generators, which are effectively input-agnostic transducers:

\begin{corollary}
$Y$ can be produced by an HMM with hidden states $R$ ($\varnothing \Rrightarrow Y|R$) if and only if $f(\varnothing \Rrightarrow Y|R)=0$.
\end{corollary}
This condition requires that $I[\overrightarrow{R}_{t+1},\overrightarrow{Y}_t;\overleftarrow{Y}_t,\overleftarrow{R}_t|R_t]=0$ for all $t$, which further implies that
\begin{align}
I[\overrightarrow{Y}_t;\overleftarrow{Y}_t|R_t]=0.
\end{align}
This means that the memory of the HMM contains all information shared between the past and future outputs \citep{ellison2011information}.  We can use the Intransducibility measure to factor large collections of observable and latent processes into modular subcomponents.

\subsubsection{Factoring high-dimensional processes with latents}

Let us consider a general high-dimensional process, which is described via a collection of observed processes $X(\mathcal{J})=\{X(j) \}_{j \in \mathcal{J}}$ and latent state processes $R(\mathcal{J}')=\{R(j) \}_{j \in \mathcal{J}'}$.  Our starting point is the joint probability distribution $\Pr(X(\mathcal{J}),R(\mathcal{J}'))$, which is accessible either through an analytical model or sufficient data to estimate mutual informations over time.   $\mathcal{J}$ and $\mathcal{J}'$ are finite sets that index the observable and latent processes, respectively.
Each element $X(j)$ is the random variable for a probability distribution over observable words, like the data one may take from outputs of a transformer, and each $R(j)$ is the random variable for a probability distribution over latent words, like the residual stream of the same transformer.  One might imagine that a larger environment is composed of many such transformers, feeding into each other, and our task is to determine how information flows between these different elements of this complex world.  The algorithm described here provides a strategy for determining this network of dependencies, paralleling the process of factoring a large integer into primes.  In this process, the measure of Intransducibility we introduced in the last section functions like the remainder of a division operation, indicating how far we are from an exact product of terms.  The result of the factorization algorithm will be an ordered set of nodes, each labeled with a subset of the observable processes and a (possibly null) subset of the latent processes.  The ordering of the nodes is a viable causal ordering of the processes.

In factorizing a joint process $\Pr(X(\mathcal{J}),R(\mathcal{J'}))$ of observables and latents, it is useful to define prime processes, which cannot be decomposed.  For subsets of the observable and latent spaces, $\mathcal{O} \subseteq \mathcal{J}$ and $\mathcal{O}' \subseteq \mathcal{J}'$ respectively, we can use the Intransducibility to determine whether the subset of observable processes $X(\mathcal{J}-\mathcal{O})$ can be interpreted as the input for a transducer that outputs $X(\mathcal{O})$ with latent memory $R(\mathcal{O}')$, meaning $X(\mathcal{J}-\mathcal{O}) \Rrightarrow X(\mathcal{O})|R(\mathcal{O}') $.  In addition, we must check whether the processes $R(\mathcal{J}'-\mathcal{O}')$ can be interpreted as the latent process of an HMM generator of $X(\mathcal{J}-\mathcal{O})$, meaning $\varnothing \Rrightarrow X(\mathcal{J}-\mathcal{O})|R(\mathcal{J}'-\mathcal{O}')$.
\begin{definition}
The joint set of observables and latents $(\mathcal{J}, \mathcal{J}')$ is \textbf{prime} if there does not exist a pair of subsets $\mathcal{O} \subseteq \mathcal{J}$ and $\mathcal{O}' \subseteq \mathcal{J}'$ such that the observable subset is nonempty $\mathcal{O} \neq \emptyset$, $X(\mathcal{J}-\mathcal{O}) \Rrightarrow X(\mathcal{O})|R(\mathcal{O}') $, and $\varnothing \Rrightarrow X(\mathcal{J}-\mathcal{O})|R(\mathcal{J}'-\mathcal{O}')$.  
\end{definition}

\autoref{fig:Recursive_Decomposition} shows a recursive process that decomposes a large transducer into smaller prime elements, and Algorithm \autoref{alg:Transduction Violation} provides the pseudocode\footnote{In general, additional assumptions such as a finite horizon, convergence of the infinite series in Equation~\ref{eq:transduction_violation}, or stationarity would be required for the algorithms to be computable.}. The input to the algorithm is the joint process over observables and latents $\Pr(X(\mathcal{J}),R(\mathcal{J}'))$, and the sets of observables and latents themselves $(\mathcal{J},\mathcal{J}')$.  The output $\mathcal{M}$ is an ordered list of subsets, where the $n$th element of $\mathcal{M}$ is $(\mathcal{O}(n),\mathcal{O}'(n))$, where $\mathcal{O}(n) \subseteq \mathcal{J}$ and $\mathcal{O}'(n) \subseteq \mathcal{J}'$, and specifies the node $\frac{X(\mathcal{O}(n))}{R(\mathcal{O}'(n))}$ in the dependency graph.  By construction, this node can only be directly influenced by the node $\frac{X(\mathcal{O}(n'))}{R(\mathcal{O}'(n'))}$ if $n' < n$, which reconstructs the ordering that is consistent with our original composition of transducers.  The process requires finding the smallest set of observable variables and latent states (characterized by the sets $\mathcal{O}$ and $\mathcal{O}'$) such that the remaining observables (characterized by $\mathcal{J}-\mathcal{O}$) are inputs to the causal channel $X(\mathcal{J}-\mathcal{O}) \Rrightarrow X(\mathcal{O})|R(\mathcal{O}') $.  If we apply this test recursively, we will discover a feedforward network of dependencies of the transducer. However, this decomposition may not be unique in general, as there may be several smallest sets of variables satisfying the above condition for a given joint process. Thus, depending on how ties are broken in selecting this set, the resulting decompositions may be different.

To illustrate this algorithm with an example, if we are given the data for a transducer $\frac{X(0:N)}{R(0:N)}$ where the elements are composed $\frac{X(n)}{R(n)}$ with $n \in \{0,\ldots, N-1\}$ in ascending order, applying the algorithm would first identify the last element $\frac{X(N-1)}{R(N-1)}$ as the smallest sets of random variables that can be transduced from the remaining observables.

\begin{figure*}
\centering
\includegraphics[width=\columnwidth]{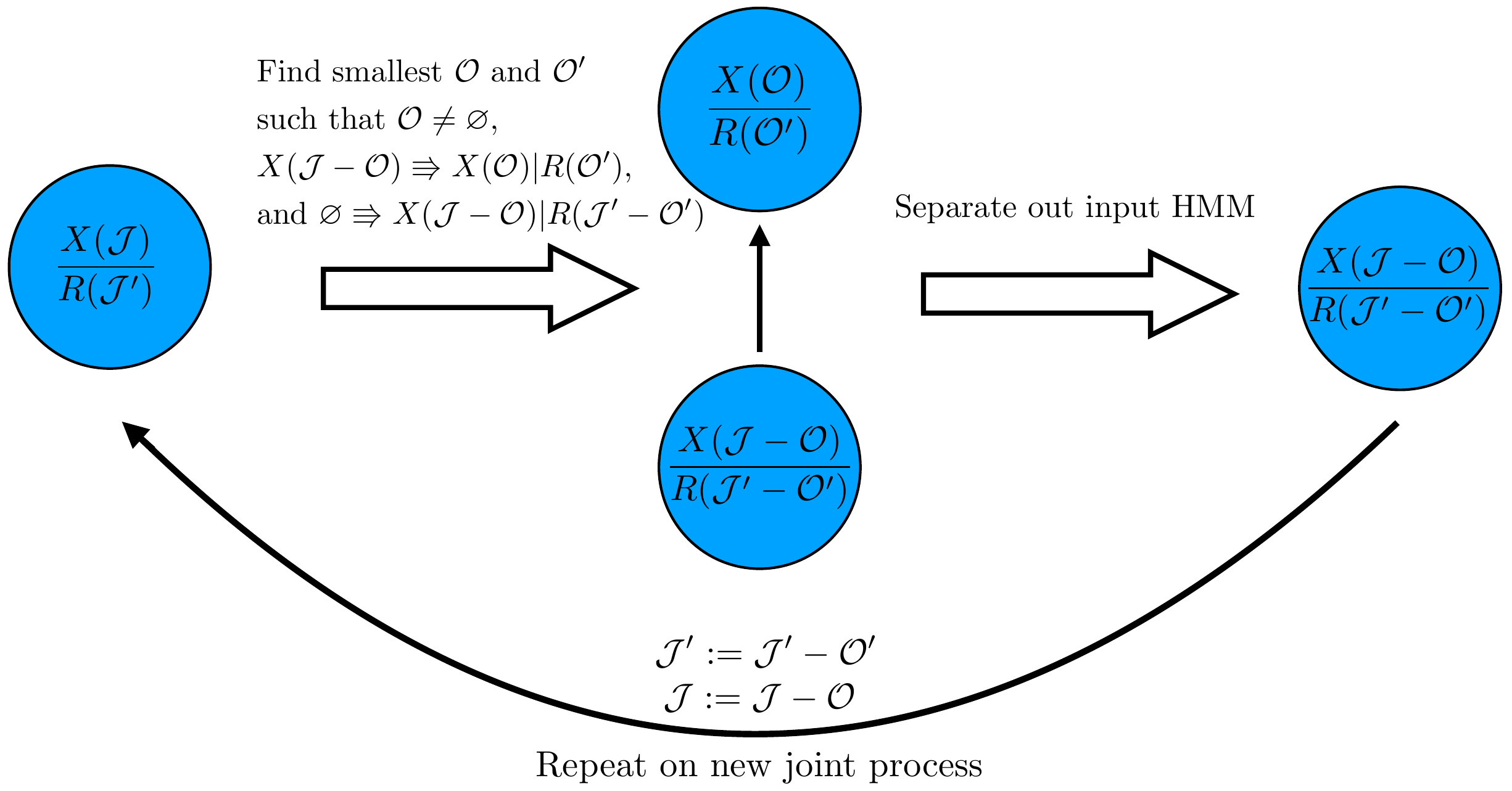}
\caption{
\centering
\small{By finding the smallest sets of observables $\mathcal{O}$ and latents $\mathcal{O}'$ such that the remaining observables $\mathcal{J}-\mathcal{O}$ function as the input to a transducer with output $X(\mathcal{O})$ and latents $R(\mathcal{O}')$, we can decompose the transducer network.  This provides a factorization of the overall transducer $\frac{X(\mathcal{J})}{R(\mathcal{J}')}$. }}
\label{fig:Recursive_Decomposition} 
\end{figure*}

\begin{algorithm}[H]
\label{alg:Transduction Violation}
\caption{Decomposition via Intransducibility}
$P :=  \Pr(X(\mathcal{J}),R(\mathcal{J}'))$ \;
$J_{\mathrm{obs}} := \mathcal{J}$ \;
$J_{\mathrm{lat}} := \mathcal{J}'$ \;
$\mathcal{M} := \emptyset$ \;

\While{$(J_{\mathrm{obs}}, J_{\mathrm{lat}})$ is not prime }{
    Find one of the smallest $(\mathcal{O},\mathcal{O}')$ such that $\mathcal{O} \neq \emptyset$, 
    
    $f(X(J_{\mathrm{obs}}- \mathcal{O}) \Rrightarrow X(\mathcal{O}),
    \mid R(\mathcal{O}')) = 0$, 
    
    and $f(\varnothing \Rrightarrow X(J_{\mathrm{obs}}- \mathcal{O})|R(J_{\mathrm{lat}}- \mathcal{O}')) = 0$ \;
    (Note: Intransducibility $f$ is evaluated using the joint process $P$) \;

    Prepend module $( \mathcal{O},\; \mathcal{O}')$ to $\mathcal{M}$ \;

        $J_{\mathrm{obs}} := J_{\mathrm{obs}} - \mathcal{O}$ \;
        $J_{\mathrm{lat}} := J_{\mathrm{lat}} - \mathcal{O}'$ \;

}
Prepend $(J_{\mathrm{obs}}, J_{\mathrm{lat}})$ to $\mathcal{M}$ \;
\Return{$\mathcal{M}$} \;

\end{algorithm}
At each step, this algorithm identifies one of the smallest downstream observable-latent combinations $(\mathcal{O},\mathcal{O}')$, and peels it off as its own modular node.

\begin{example}
    Extending the running Mars rover example, consider an additional thermal
    module that tracks the internal temperature $\Theta_t$ of the rover using a thermometer reading $\tilde{\Theta}_t$, which is dependent on the ambient temperature $A_t$ and a heating system $H_t$. The rover's components may not function properly if the internal temperature drops below a critical threshold, requiring the constant monitoring and regulation of this variable. A monolithic world model would entangle thermal dynamics of the rover with navigation. However, applying the
    Intransducibility metric allows us to discover modular structure.

    Suppose we have access to both observables
    $(X, H, I, L, A, \tilde{\Theta}, V)$ and latent states $(P, W, B, \Theta)$,
    where $X$ and $H$ denote motor and heater commands respectively, $I$ is the camera image, $L$ is illuminance, $A$ is ambient temperature, $\tilde{\Theta}$ is the measured internal temperature, $V$ is battery voltage, and the latents represent position, weather state, battery state, and internal temperature, respectively.

    Applying Algorithm~\ref{alg:Transduction Violation}, we search for the smallest subsets $(\mathcal{O}, \mathcal{O}')$ that satisfy the condition in the algorithm, which proceeds as follows:

	    \begin{enumerate}
	        \item \textbf{Iteration 1}: Find that $\mathcal{O} = \{I\}$ and
	              $\mathcal{O}' = \{P\}$ satisfy
	              $f((X, H, L, A, \tilde{\Theta}, V) \Rrightarrow I \mid P) = 0$.
	              Camera images depend on motor commands only through position. Heater commands, weather, and thermal state do not affect the terrain that the rover sees. This is the smallest extractable subset, so we extract module $T_{\mathrm{nav}}$.

	        \item \textbf{Iteration 2}: From the remaining observables
	              $(X, H, L, A, \tilde{\Theta}, V)$ and latents $(W, B, \Theta)$,
	              testing whether power or thermal can be extracted independently:
	              \begin{itemize}
	                  \item $f((X, H, L, A, \tilde{\Theta}) \Rrightarrow V \mid B) > 0$
	                        because voltage predictions improve when conditioning on thermal state (since efficiency depends on temperature).
	                  \item $f((X, H, L, A, V) \Rrightarrow \tilde{\Theta} \mid \Theta) > 0$
	                        because temperature predictions improve when conditioning on battery state (since heater operation depends on available power).
	              \end{itemize}
	              However, jointly we find $\mathcal{O} = \{V, \tilde{\Theta}\}$ and $\mathcal{O}' = \{B, \Theta\}$ satisfy $f((X, H, L, A) \Rrightarrow (V, \tilde{\Theta}) \mid (B, \Theta)) = 0$. The power and thermal subsystems form a \emph{coupled module} $T_{\mathrm{pwr{+}therm}}$ that cannot be further factored.

	        \item \textbf{Iteration 3}: The remaining observables are $(X, H, L, A)$ with latent $(W)$. We find that $\mathcal{O} = \{L, A\}$ and $\mathcal{O}' = \{W\}$ satisfy $f((X, H) \Rrightarrow (L, A) \mid W) = 0$. The weather process is input-agnostic: solar irradiance and ambient temperature depend only on atmospheric dynamics, not rover commands. We therefore extract module $T_{\mathrm{sol}}$ to arrive at the final decomposition.
	    \end{enumerate}

    The algorithm thus recovers a four-module decomposition: an input motor and heating module, a navigation module, an exogenous weather module, and a coupled power-thermal module.
\end{example}

\begin{figure}[t]
    \centering
    \includegraphics[width=\columnwidth]{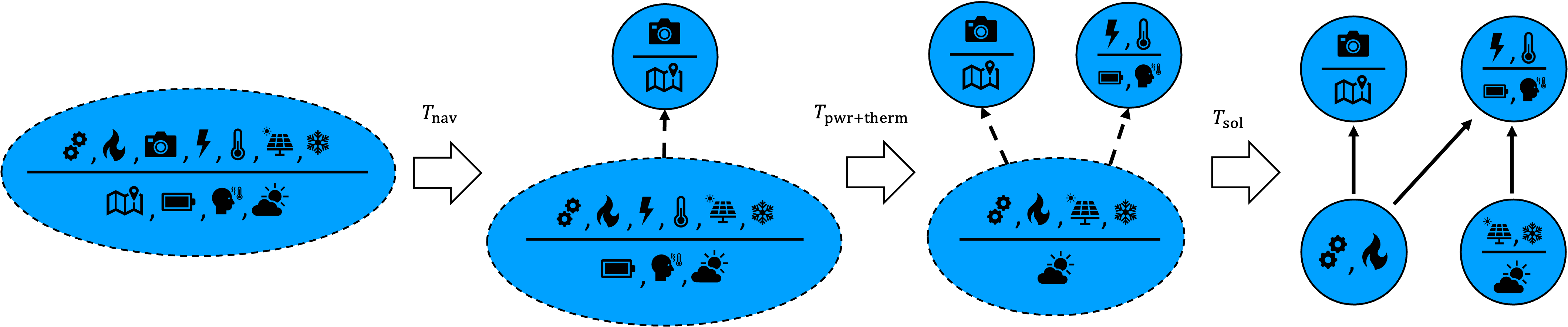}
    \caption{
    \small{Iterative decomposition of the Mars rover world model. Starting from the monolithic transducer $T_{\mathrm{world}}$ (left), the algorithm first extracts the navigation module $T_{\mathrm{nav}}$ by identifying that camera images $I$ depend on motor commands only through position $P$. Next, the exogenous weather module $T_{\mathrm{sol}}$ is separated, since solar irradiance and ambient temperature are input-agnostic. The remaining power and thermal subsystems cannot be factored independently and form the coupled module $T_{\mathrm{pwr{+}therm}}$. The final network (right) shows the three downstream modules of the decomposition (with the input motor/heater commands acting as the source) with causal dependencies: weather and inputs jointly drive the power-thermal module, while inputs alone drive navigation.}}
    \label{fig:rover-decomposition}
    \end{figure}

\subsection{Factoring without latents}
\label{sec:factoring_nolatents}

In the absence of explicit latent variables, we can no longer appeal to Intransducibility directly, but we can still exploit the nonanticipatory constraint to probe causal structure from observables alone. To do so, we now introduce an observable-only diagnostic (Acausality) which quantifies how much an interface deviates from being realizable as a feedforward transducer.

\subsubsection{Measuring dependency without latents}

When viewing the behavior of an environment, we may not have access to the latent parameters that carry information between inputs and outputs.  In this case, it is still possible to determine a causal architecture for the data using the constraints of transduction.  
As shown in \autoref{thm:transducer_interface_equivalence}, being nonanticipatory (i.e., an interface) is a necessary and sufficient condition for the joint process $\Pr(X,Y)$ to be describable via the application of a transducer. 
Therefore, for two processes $X$ and $Y$, we can introduce a measure which tells us whether $X$ could be the input to a transducer that produces outputs $Y$, a condition we denote $X \Rrightarrow Y$.

We define the \emph{Acausality} of an interface $\mathcal{I}[Y|X]$, which is a measure that allows us to determine the degree to which it violates the nonanticipatory condition. 
\begin{definition}
    The \textbf{Acausality} of an interface $\mathcal{I}[Y|X]$ is the sum over all time steps of the mutual information between past outputs and future inputs conditioned on past inputs
    \begin{align}
AC[X \Rrightarrow Y]=\sum_{t=0}^{\infty}I[\overleftarrow{Y}_t;\overrightarrow{X}_t|\overleftarrow{X}_t].
    \end{align}
\end{definition}
Informally, Acausality is large when past outputs of the interface depend on future inputs, even after conditioning on past inputs. In other words, it quantifies how much the interface “looks like it needs to see the future” to explain its behavior. 
This measure can be used to determine whether the flow of information can be interpreted as proceeding from $X$ to $Y$

\begin{lemma}
The interface $\mathcal{I}[Y|X]$ can be expressed as a transducer from $X$ to $Y$ iff the Acausality is zero $AC[X \Rrightarrow Y]=0$.
\end{lemma}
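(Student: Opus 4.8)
The plan is to reduce the claim to \autoref{thm:transducer_interface_equivalence} by showing that the vanishing of $AC[X \Rrightarrow Y]$ is equivalent to the nonanticipatory condition of \autoref{eq:nonanticipatory} holding at every time step. Since $AC$ is defined as a sum of conditional mutual informations $I[\overleftarrow{Y}_t;\overrightarrow{X}_t|\overleftarrow{X}_t]$, and each such term is non-negative, the total sum is zero if and only if every individual term is zero. This is exactly the same non-negativity argument used in the preceding lemma on Intransducibility, so the proof will parallel that one closely.

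Next, I would invoke the standard fact that a conditional mutual information $I[A;B|C]$ vanishes if and only if $A$ and $B$ are conditionally independent given $C$, i.e.\ $\Pr(A|B,C) = \Pr(A|C)$. Applying this with $A = \overleftarrow{Y}_t$, $B = \overrightarrow{X}_t$, and $C = \overleftarrow{X}_t$ shows that $I[\overleftarrow{Y}_t;\overrightarrow{X}_t|\overleftarrow{X}_t] = 0$ is precisely the statement $\Pr(\overleftarrow{Y}_t|\overleftarrow{X}_t,\overrightarrow{X}_t) = \Pr(\overleftarrow{Y}_t|\overleftarrow{X}_t)$, which is the nonanticipatory condition of \autoref{eq:nonanticipatory}. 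Hence $AC[X \Rrightarrow Y] = 0$ holds if and only if the interface is nonanticipatory for all $t$, which the text established as the defining property of a causal interface in the feedforward setting.

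Finally, I would close the argument by appealing to \autoref{thm:transducer_interface_equivalence}, which states that a collection of conditional distributions constitutes a causal interface if and only if it admits a transducer presentation. Combining this with the equivalence just established yields the desired biconditional: $\mathcal{I}[Y|X]$ can be expressed as a transducer from $X$ to $Y$ if and only if $AC[X \Rrightarrow Y] = 0$. Notably, unlike the latent-based Intransducibility lemma, here no latent process $R$ appears, so the chain of reasoning passes through \autoref{thm:transducer_interface_equivalence} (interface $\Leftrightarrow$ transducer presentation) rather than through the explicit transducer-shielding condition of \autoref{eq:transducer_condition}.

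The main obstacle, though a mild one, is ensuring that the pointwise characterization of conditional independence via vanishing conditional mutual information is applied correctly to the semi-infinite objects $\overleftarrow{Y}_t$ and $\overrightarrow{X}_t$; for the series defining $AC$ to be meaningful, one needs the consistency and convergence assumptions flagged in the footnote accompanying the decomposition algorithm. Given those, the argument is essentially a one-to-one translation between the information-theoretic measure and the probabilistic nonanticipatory condition, with \autoref{thm:transducer_interface_equivalence} supplying the bridge to transducer presentability.
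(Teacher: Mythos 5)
Your proposal is correct and follows essentially the same route as the paper's own proof: both use non-negativity of conditional mutual information to reduce $AC[X \Rrightarrow Y]=0$ to the vanishing of each term, identify this with the nonanticipatory condition of \autoref{eq:nonanticipatory}, and then pass to transducer presentability via \autoref{thm:transducer_interface_equivalence} (which the paper invokes implicitly rather than by name). The only cosmetic difference is that you phrase it as a single chain of equivalences while the paper splits it into two explicit directions.
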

\begin{proof}
Let us prove in both directions:
\begin{itemize}
    \item $\Rightarrow$ If $\mathcal{I}[Y|X]$ can be expressed as a transducer, then it is nonanticipatory, and $\Pr(\overleftarrow{Y}_t|\overrightarrow{X}_t,\overleftarrow{X}_t)=\Pr(\overleftarrow{Y}_t|\overleftarrow{X}_t)$, which further implies that $I[\overleftarrow{Y}_t;\overrightarrow{X}_t|\overleftarrow{X}_t]=0$ for all $t$.  Therefore, all elements of the sum of the Acausality are zero, and the Acausality is zero.
    \item $\Leftarrow$ Because conditional mutual information is always greater than or equal to zero, $I[A;B|C]\geq 0$, this implies that all elements of the sum of Acausality must be zero
    \begin{align}
        I[\overleftarrow{Y}_t;\overrightarrow{X}_t|\overleftarrow{X}_t]=0 \text{ for all }t.
    \end{align}
    The conditional mutual information $I[A;B|C]$ is zero iff $\Pr(A|BC)=\Pr(A|C)$, so we arrive at the condition for being nonanticipatory $\Pr(\overleftarrow{Y}_t|\overrightarrow{X}_t,\overleftarrow{X}_t)=\Pr(\overleftarrow{Y}_t|\overleftarrow{X}_t)$.  Thus, the interface can be expressed as a transducer from $X$ to $Y$.
\end{itemize} 
\end{proof}

The Acausality $AC[X \Rrightarrow Y]$ of the interface is zero iff $X \Rrightarrow Y$.  Therefore, there exists a latent variable $R$ that can transform $X$ into $Y$.  A strong candidate is the set of causal states of the $\epsilon$-transducer~\citep{barnett2015computational}.  As before, we can apply this simple measure recursively to determine the dependency between many observed variables. 


\subsubsection{Factoring high-dimensional processes without latents}

The measure of Acausality allows us to build a network of dependencies between random variables $X(\mathcal{J})$, where there is an edge from node $j$ to $j'$ iff $X(j) \Rrightarrow X(j')$. Once again, it is useful to define primes in the context of an observable process
\begin{definition}
The set of observables $\mathcal{J}$ is \textbf{prime} if there does not exist a subset of the observables $\mathcal{O} \subseteq \mathcal{J}$ such that the observable subset is nonempty $\mathcal{O} \neq \emptyset$ and $X(\mathcal{J}-\mathcal{O}) \Rrightarrow X(\mathcal{O}) $.
\end{definition}
The Acausality allows us to factor a process composed of many elements in the same way as before, functioning as the remainder in the ``division'' operation.

Algorithm \autoref{alg:Acausality} shows the general method for discovering such a network given the probabilities $\Pr(X(\mathcal{J}))$ of that set of random variables.  It effectively mirrors the strategy when the probability of latent variables $R(\mathcal{J}')$ is also known, but reduces the difficulty of those calculations.  Once you have factored the set of observable random variables, inference becomes simpler.  Rather than infer the $\epsilon$-transducer from the \emph{whole system}, we can infer the causal structure of subcomponents of the factorized transducer.  This is sufficient to reconstruct the overall causal architecture of the whole system.

\begin{algorithm}[H]
\label{alg:Acausality}
\caption{Decomposition via Acausality}
$P :=  \Pr(X(\mathcal{J}))$ \;
$J_{\mathrm{obs}} := \mathcal{J}$ \;
$\mathcal{M} := \emptyset$ \;

\While{$J_{\mathrm{obs}}$ is not prime }{
    Find one of the smallest $\mathcal{O} $ such that $\mathcal{O} \neq \emptyset$ and
    $AC[X(J_{\mathrm{obs}}- \mathcal{O}) \Rrightarrow X(\mathcal{O}) ] = 0$ \;
    (Note: Acausality $AC$ is evaluated using the joint process $P$) \;

    Prepend module $\mathcal{O}$ to $\mathcal{M}$ \;

        $J_{\mathrm{obs}} := J_{\mathrm{obs}} - \mathcal{O}$ \;

}
Prepend $J_{\mathrm{obs}}$ to $\mathcal{M}$ \;
\Return{$\mathcal{M}$} \;

\end{algorithm}

\subsection{Relation to causal discovery}
\label{sec:causal_discovery}
Our Acausality-based factorization relates to a broader literature on discovering causal links between subsystems in dynamical processes~\citep{glymour2019review,nogueira2022methods}. Time-series causal discovery methods such as PCMCI+ \citep{runge2019detecting} estimate directed networks by combining conditional independence tests with graph search, under time-series Markov and faithfulness assumptions. In contrast, we start from a representation of the agent–environment interface and use an information-theoretic notion of Acausality—essentially directed information from future inputs to past outputs—to decide when that interface can be realized as a feedforward transducer and how it factors into composable modules. Our method is capable of detecting non-Markovian dependencies between time series.

Our approach is also complementary to recent information-theoretic causality measures such as SURD \citep{martinez2024decomposing,jansma2025decomposing}, which decompose the strength of multi-variable causal influence into unique, redundant, and synergistic contributions: Acausality provides a structural test for when a module can be treated as causally downstream of another, while SURD-style decompositions can, in principle, be applied within each module to analyze the nature of multi-parent influences. Finally, empirical work on compositional neural subspaces in cortex \citep{tafazoli2025building} suggests that biological agents may realize world-models as compositions of reusable latent subspaces; our results provide a normative framework for such ‘cognitive building blocks’ in terms of composable transducers.

\section{Coarse-graining networks of transducers}
\label{sec:coarse-graining}

Decomposition recovers a fine-grained network of interacting transducers whose joint operation generates a high-dimensional observable process. Once this network has been identified, one might well want to operate at a reduced level of description --- e.g. eliminating variables that are not relevant for a given predictive or inferential task via coarse-graining. 

Coarse-graining, in this setting, refers to reducing the state space of the overall transducer by removing nodes entirely: their observable processes and the latent states required to model them. This operation must preserve the marginal interface among the remaining nodes. Thus, coarse-graining is only valid when the nodes being removed can be `lumped out' without altering the conditional distribution governing the surviving observables. This extends the idea of  `lumpability' from Markov chains~\citep[Ch.~6.3]{kemeny1969finite} to multi-transducer networks with both inputs and outputs.

\subsection{A multiscale perspective on transducer networks}

A network of $N$ transducers may be viewed as a single input-independent transducer with joint latent state $R(0:N)$, generating the observable process $\Pr(X(0:N))$ (see \autoref{fig:N_Observable_Dependencies}).  From this perspective, different descriptive scales correspond to retaining only a subset of nodes and eliminating the rest while preserving the marginal interface on the nodes we keep.

Consider selecting a contiguous block of nodes composed of the transducers $T(n),T(n+1), \cdots, T(n+a-1)$, whose observables $X(n:n+a)$ and latents $R(n:n+a)$ we wish to track explicitly.  We call such a block causally adjacent when every node outside it lies entirely upstream or entirely downstream.  In this case, all predictive influence from the eliminated nodes reaches the retained block only through its boundary observables, ensuring that those external nodes can be coarse-grained without altering the conditional dynamics of the block.

This viewpoint provides a clean multiscale structure: once a block has been identified, nodes before it may be eliminated by conditioning on their observables, and nodes after it may be eliminated by marginalizing over theirs.  These coarse-grainings are consistent because they preserve the marginalized interface.  In other words, we can ``lump'' latent states together as long as they are consistent within this causally adjacent chain.  The following subsections detail these two forms of coarse-graining and their role in reducing the state space of the transducer network.  This is useful in extrapolating less complex structures from the overall network.

\begin{figure*}[h!]
\centering
\includegraphics[width=.9\columnwidth]{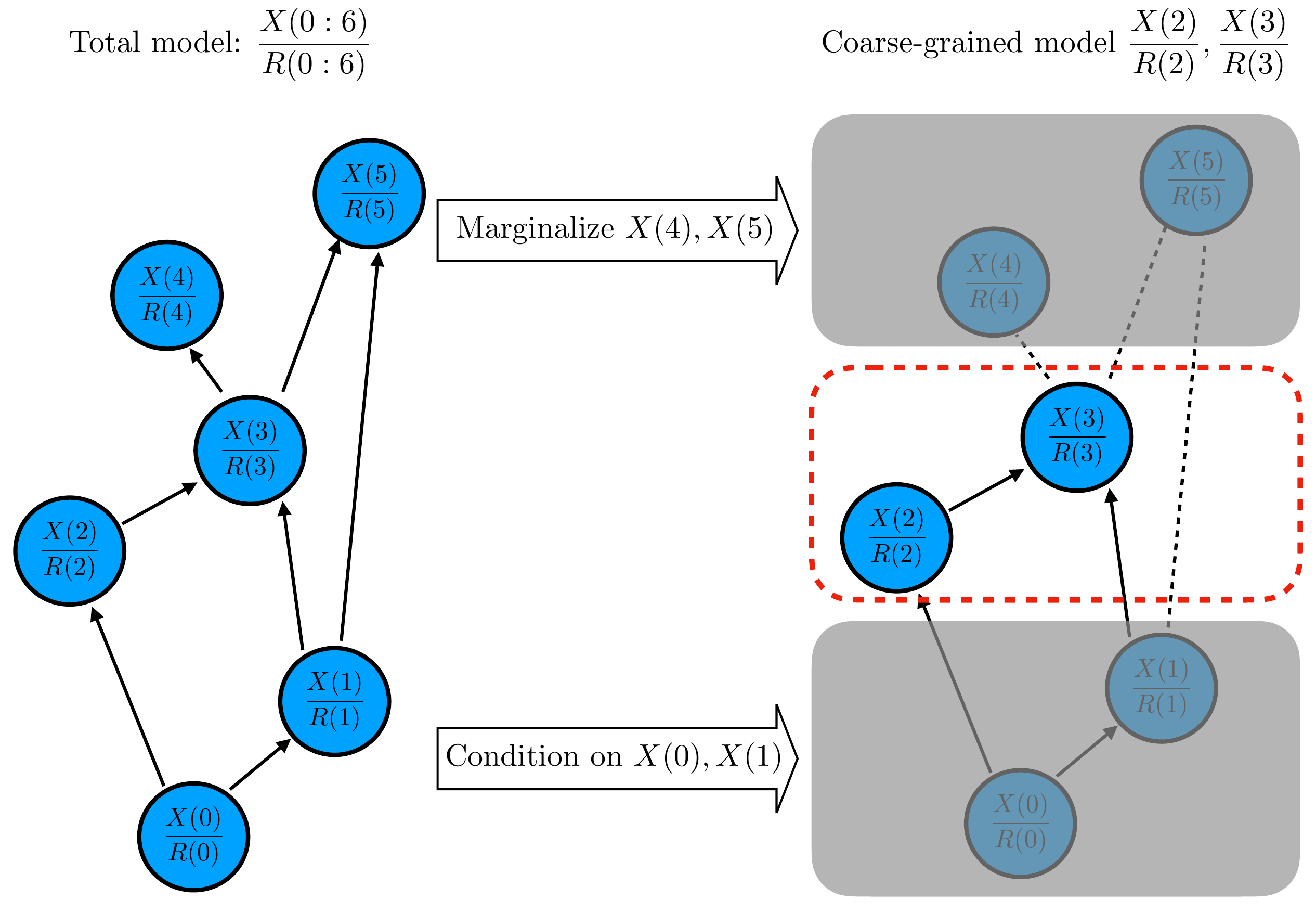}
\caption{\centering\small \emph{The total transducer network can be simplified from the bottom by conditioning on upstream observables, and from the top by marginalizing downstream observables.}}
\label{fig:Coarse_Grain} 
\end{figure*}

\subsubsection{Simplifying from the top}

If our objective is to describe the system only up to $X(b-1)$, then all nodes $X(b:N)/R(b:N)$ lie strictly downstream of the region of interest.  Their influence on earlier nodes occurs solely through the observable variables we choose not to retain.  Thus, we may marginalize over these downstream observables:
\begin{align}
    \mathcal{I}[X(0:b)|\emptyset]= \sum_{x(b:N)}\Pr(X(0:b),X(b:N)=x(b:N)).
\end{align}
Marginalizing over $X(b:N)$ also removes any dependence on the downstream latent variables $R(b:N)$, whose only predictive role is mediated through the eliminated observables.  This yields a reduced transducer representation that exactly preserves the interface on the retained variables.

\subsubsection{Simplifying from the bottom}

Consider a contiguous block of nodes $X(0:a)/R(0:a)$ forming the bottom of the network.  Even though individual nodes within the block may have internal dependencies, the block as a whole has no incoming edges from the outside.  Therefore, its joint observable trajectory $X(0:a)$ can be treated as an externally supplied input.

Conditioning on these observables removes any dependence on their latent states:
\begin{align}
\mathcal{I}[X(a:b)|X(0:a)]= \frac{\Pr(X(0:b))}{\Pr(X(0:a))}.
\end{align}
Once the entire block's upstream observables are fixed, the latent variables $R(0:a)$ make no further predictive distinctions for downstream nodes.  They can therefore be coarse-grained out, yielding an exact reduced complexity representation of the system's behavior above the block.

\begin{figure}[t]
    \centering
    \includegraphics[width=\columnwidth]{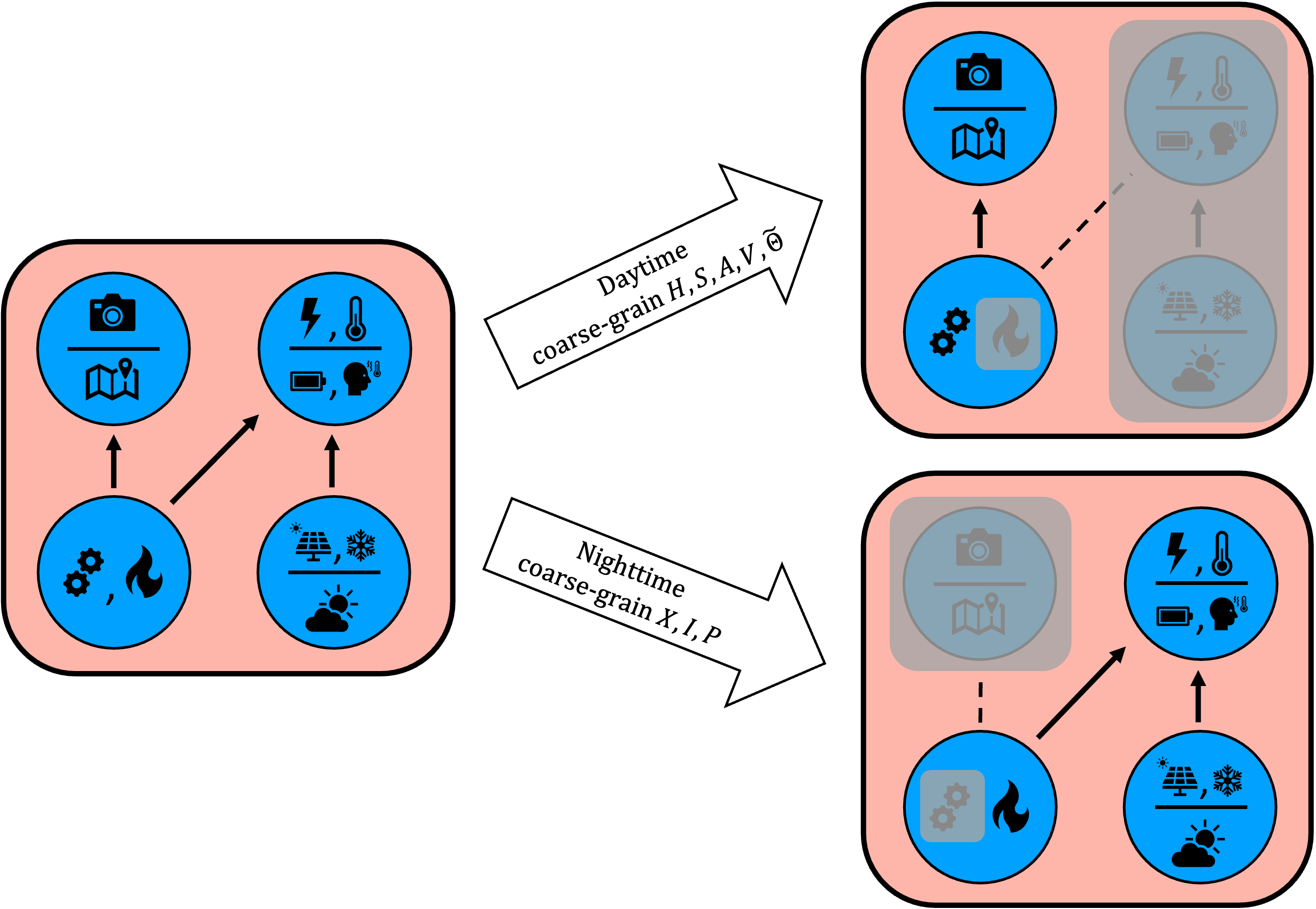}
    \caption{Mission-dependent coarse-graining of the decomposed rover network. The full three-module network (left) is simplified under two scenarios. 
    \emph{Daytime} (top right): the mission objective depends only on position, so marginalizing over $(H, L, A, V, \tilde{\Theta})$ eliminates the weather and power-thermal modules, reducing the world model to the navigation transducer $T_{\mathrm{nav}}$ alone. \emph{Nighttime} (bottom right): the rover is stationary and the objective shifts to thermal survival, so conditioning on fixed position eliminates navigation; the reduced model retains only the weather module $T_{\mathrm{sol}}$ and the coupled power-thermal module $T_{\mathrm{pwr{+}therm}}$, capturing the essential trade-off between heater activation and battery depletion.}
    \label{fig:rover-coarse-graining}
    \end{figure}

\begin{example}
\label{ex:rover_daytime}
Returning to our running Mars rover example, the decomposed rover model admits different coarse-grainings depending on the mission objective. We illustrate this with two scenarios that highlight how goal structure determines which modules can be eliminated.

During Martian daytime with clear skies, the illuminance $L$ is high, ambient temperature $A$ is moderate, and the rover's thermal and power states are far from critical thresholds. The mission objective is to reach a waypoint, with reward $r_t = -d(P_t, P_{\mathrm{goal}})$ depending only on position.

In this regime, the weather, thermal, and power modules are not reward-relevant: battery drain from motors is easily replenished, and temperature regulation requires no active management. We can therefore coarse-grain by marginalizing over $(H, L, A, \tilde{\Theta}, V)$:
\begin{align}
    \mathcal{I}[I \mid X] = \sum_{h, l, a, \tilde{\theta}, v}
    \Pr(I, H{=}h, L{=}l, A{=}a,
    \tilde{\Theta}{=}\tilde{\theta}, V{=}v \mid X).
\end{align}
The effective world model reduces to the navigation transducer $T_{\mathrm{nav}}$ alone, with state space collapsing from $(P, W, B, \Theta)$ to just $P$. This enables training a navigation policy on a significantly simplified simulator that ignores energy dynamics entirely.

During Martian night, illuminance drops to $L \approx 0$ and ambient temperature falls significantly. The rover remains stationary to conserve energy, and the mission objective shifts to thermal survival.

In this case, navigation becomes irrelevant so the rover does not move, and camera images $I$ provide no reward-relevant information. We can therefore coarse-grain by conditioning on a fixed position and marginalizing over $(X, I)$
\begin{align}
    \mathcal{I}[(\tilde{\Theta}, V) \mid H, L, A] =
    \frac{\Pr(\tilde{\Theta}, V, I \mid X{=}0, H, L, A)}
         {\Pr(I \mid X{=}0)}.
\end{align}
The effective latent state space reduces to $(W, B, \Theta)$, eliminating position $P$. The remaining model retains the weather module $T_{\mathrm{sol}}$ and the coupled power-thermal module $T_{\mathrm{pwr{+}therm}}$, capturing the essential trade-off: heater activation $H$ maintains temperature but depletes the battery, which cannot recharge until dawn. An optimal policy must balance immediate thermal reward against the risk of battery depletion.

Importantly, we cannot further coarse-grain by removing power, as the thermal reward depends on heater usage, which is constrained by battery state. Formally, marginalising $V$ would lose the information that $\Pr(H_t \mid \overleftarrow{V}_t)$, i.e., the agent's heating decisions, depend on voltage history. The power and thermal modules remain coupled even after eliminating navigation.
\end{example}

\subsection{Simplifying sparse networks}

In a fully connected network, coarse-graining is only possible at the very top or bottom of the system.  Sparse networks, however, contain many internal clusters that can be simplified.

If a group of nodes has no incoming edges from outside, we may condition on its observables and remove its latent states, as in bottom simplification. If it has no outgoing edges, we may marginalize its observables and remove its latents, as in top simplification.

Both operations preserve the marginal interface on the remaining nodes.  This provides a framework for lumpability in transducers \citep{rosas2024software}: a cluster may be removed whenever it introduces no additional predictive distinctions for the rest of the network.  Sparse dependency structure, therefore, provides many more opportunities to reduce state-space complexity.

\subsection{Parallelizable inference}

Coarse-graining simplifies the latent space, but parallelizable inference relies only on observable processes.  After applying Algorithm \autoref{alg:Acausality}, the system is decomposed into an ordered set of modules, each with its own interface:
\begin{align}
\mathcal{I}[X(n)|X(0:n)].
\end{align}
Because these modules have zero Acausality, each one can be inferred independently from data.  This breaks the global modeling task into a set of smaller sub-problems, each corresponding to a different component of the decomposed observable process.

Thus, even without access to latent variables, decomposition yields a complete set of structurally independent pieces.  Each can be learned in parallel, providing a direct way to build models from coarse-grained observations while avoiding the complexity of inferring a single high-dimensional latent representation.

\section{Decomposing causal states}
\label{sec:decomposition_compmech}

In previous sections, we showed how nonanticipation and sparsity jointly induce hierarchical decompositions of dynamical processes: first by factoring monolithic interfaces into networks of transducers (\autoref{sec:decomp_framework}), and then by moving up and down these networks to obtain emergent macroscopic levels (\autoref{sec:coarse-graining}). 
In this section, we apply this approach to decompose minimal predictive representations of a process, known as $\epsilon$-machines and $\epsilon$-transducers. The $\epsilon$-machine~\citep{crutchfield2012between}, a special case of the $\epsilon$-transducer~\citep{barnett2015computational}, is a powerful tool for mechanistic interpretability which captures the minimal `causal states' for a process, providing a theoretical prediction for what \emph{should} be stored in the latent states of an optimal predictor \citep{marzen2017nearly}. 
Interestingly, it has been shown (see~\citep{rosasai}) that $\epsilon$-transducers are equivalent to predictive state representations~\citep{littman2001predictive}, a reinforcement learning technique to generate a minimal set of state variables for a given partially-observed setting. 
Moreover, recent studies are revealing that causal state structures are reconstructed inside the latent space of neural networks~\citep{shai2025transformers,piotrowski2025constrained,shai2026transformers}.

We therefore investigate how the decomposition principles developed earlier act on these canonical representations: when and how can a monolithic minimal representation be factorized into interacting modules, and what kinds of hierarchies arise from this?

To study this, let us first note that the composition of the two transducers $T$ and $U$ corresponds to the composition of the interface $\mathcal{I}[Z|XY]$ with the interface $\mathcal{I}[Y|X]$ to generate the interface $\mathcal{I}[YZ|X]$.  If both $T$ and $U$ are the $\epsilon$-transducers (minimal predictive model) for their respective interfaces, then they are both unifilar~\citep{barnett2015computational,rosasai}.  Unifilarity means that the next latent variable is a function of the present latent variable, inputs, and outputs
\begin{align}
    R_{t+1}& =\epsilon_T(R_t,Y_t|X_t),
    \\ S_{t+1}& =\epsilon_U(S_t,Z_t|X_t,Y_t).
\end{align}
Here $\epsilon_T$ and $\epsilon_U$ are the $\epsilon$-maps for the two transducers $T$ and $U$ respectively, which are the functions that take present inputs, outputs, and latent variables, to the next latent variable.  This also implies that the joint transducer is unifilar with the $\epsilon$-map for the composed transducer $UT$
\begin{align}
    [R_{t+1},S_{t+1}]& =\epsilon_{UT}([R_{t},S_t],Y_t,Z_t|X_t)
    \\ & =[\epsilon_T(R_t,Y_t|X_t),\epsilon_U(S_t,Z_t|X_t,Y_t)],
\end{align}
as well as an $\epsilon$-mapping from histories to states of the composite machine:
\begin{align}
    [R_t,S_t]& =\epsilon_{UT}(\overleftarrow{Y}_t,\overleftarrow{Z}_t|\overleftarrow{X}_t)
    \\ & =[\epsilon_{T}(\overleftarrow{Y}_t|\overleftarrow{X}_t),\epsilon_{U}(\overleftarrow{Z}_t|\overleftarrow{X}_t,\overleftarrow{Y}_t)].
\end{align}

\begin{theorem}
\label{thm: Composed Causal States}
    The $\epsilon$-transducer of the composite interface $\mathcal{I}[YZ|X]$ of two interfaces $\mathcal{I}[Y|X]$ and $\mathcal{I}[Z|YX]$ is the composition of the $\epsilon$-transducers of those two interfaces.
\end{theorem}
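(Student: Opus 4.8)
The plan is to show that the composite transducer $UT$ is the minimal unifilar predictive presentation of the interface $\mathcal{I}[YZ|X]$, i.e.\ its $\epsilon$-transducer. Recall that the $\epsilon$-transducer of an interface is characterized by three properties: it generates the interface, it is unifilar, and its states are in bijection with the \emph{causal states}, the equivalence classes of input--output pasts that induce identical predictive distributions over the output future given the input future. The preamble already establishes the first two properties for $UT$: by \autoref{def:Composition} the composition generates $\mathcal{I}[YZ|X]$, and the displayed $\epsilon$-maps show that $UT$ is unifilar with state map $[R_t,S_t]=[\epsilon_T(\overleftarrow{Y}_t|\overleftarrow{X}_t),\epsilon_U(\overleftarrow{Z}_t|\overleftarrow{X}_t,\overleftarrow{Y}_t)]$. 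It therefore remains to prove that the joint state $[R_t,S_t]$ is precisely the causal state of the composite interface, which splits into a \emph{sufficiency} direction (same joint state implies same composite prediction) and a \emph{distinctness} direction (distinct joint states imply distinct composite predictions).

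For sufficiency, first factor the composite predictive distribution as
\begin{align}
\Pr(\overrightarrow{Y}_t,\overrightarrow{Z}_t\mid \overrightarrow{X}_t,\overleftarrow{X}_t,\overleftarrow{Y}_t,\overleftarrow{Z}_t)
= \Pr(\overrightarrow{Y}_t\mid \overrightarrow{X}_t,\overleftarrow{X}_t,\overleftarrow{Y}_t,\overleftarrow{Z}_t)\,
\Pr(\overrightarrow{Z}_t\mid \overrightarrow{Y}_t,\overrightarrow{X}_t,\overleftarrow{X}_t,\overleftarrow{Y}_t,\overleftarrow{Z}_t).
\end{align}
Because the feedforward structure has $T$ ignore $Z$ entirely, $\overrightarrow{Y}_t$ is conditionally independent of $\overleftarrow{Z}_t$ given $(\overrightarrow{X}_t,\overleftarrow{X}_t,\overleftarrow{Y}_t)$, and the causal-state shielding of $T$ (the transducer condition behind \autoref{thm:transducer_interface_equivalence} applied to $\mathcal{I}[Y|X]$) reduces the first factor to a function of $R_t$ and $\overrightarrow{X}_t$ alone. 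Likewise, treating $(X,Y)$ as the input of $U$, the shielding property of $U$ for $\mathcal{I}[Z|YX]$ reduces the second factor to a function of $S_t$ and the $U$-input future $(\overrightarrow{X}_t,\overrightarrow{Y}_t)$. Hence the whole product depends on the past only through $[R_t,S_t]$, so the composite causal state is a function of the joint state and there are at most as many composite causal states as joint states.

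For distinctness, suppose two joint states $[R,S]$ and $[R',S']$ differ. If the $T$-components differ, marginalizing the product above over $\overrightarrow{Z}_t$ recovers $\Pr(\overrightarrow{Y}_t\mid\overrightarrow{X}_t,R)$, which separates $R$ from $R'$ by minimality of the $\epsilon$-transducer $T$; hence the composite predictions differ. If the $T$-components agree but the $U$-components differ, minimality of $U$ guarantees some $U$-input future $(\overrightarrow{x},\overrightarrow{y})$ on which the $\overrightarrow{Z}$-predictions of $S$ and $S'$ disagree, and evaluating the composite prediction at that input and output future exhibits the disagreement. Together with sufficiency this gives a bijection between joint states and composite causal states, so $UT$---being a unifilar generating presentation whose states are exactly the causal states---is the $\epsilon$-transducer of $\mathcal{I}[YZ|X]$.

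The main obstacle is the second case of the distinctness step: the input future of $U$ that witnesses $S\neq S'$ contains an output future $\overrightarrow{y}$ that, in the composite, is not supplied freely but must be generated by $T$. If $T$ assigns this $\overrightarrow{y}$ zero probability under the shared component $R$ and the chosen $\overrightarrow{x}$, the distinction is never exercised and the two states could in principle collapse. I expect to close this gap by invoking the convention that causal states are defined relative to all realizable input futures, so that the witnessing $(\overrightarrow{x},\overrightarrow{y})$ can be chosen among those with positive probability, or by imposing a mild support/reachability assumption on $T$'s outputs; making this realizability argument precise is the crux of the proof.
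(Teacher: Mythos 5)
Your proposal is correct and takes essentially the same approach as the paper's proof: factor the composite predictive distribution into a $Y$-factor and a $Z$-factor, use the feedforward independence $p(\overrightarrow{y}_t|\overrightarrow{x}_t,\overleftarrow{z}_t,\overleftarrow{y}_t,\overleftarrow{x}_t)=p(\overrightarrow{y}_t|\overrightarrow{x}_t,\overleftarrow{y}_t,\overleftarrow{x}_t)$, and establish the state--prediction equivalence in both directions via minimality of $T$ and $U$ (your sufficiency/distinctness split is just the contrapositive phrasing of the paper's $\Rightarrow$/$\Leftarrow$ argument). The zero-probability obstacle you flag at the end is genuine, but it is equally unaddressed in the paper, whose proof divides out the common factor $p(\overrightarrow{y}_t|\overrightarrow{x}_t,\overleftarrow{y}_t,\overleftarrow{x}_t)$ when plugging back and factoring, implicitly assuming the witnessing output future has positive probability --- so on that point your attempt is, if anything, more careful.
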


\begin{proof}
It can be shown that the product $\epsilon$-transducer only distinguishes states when they have different predictions.  See \autoref{app:Proving Composite Causal States} for details.  Thus, the product $\epsilon$-transducer is both predictive (a function of the past) and minimal, making its states equivalent to the causal states of the composite interface.
\end{proof}

If the building-block models of a complex graph of interfaces are each minimal predictors, then so too is the composite model.  This guarantees closure of minimal predictive transducers under composition, allowing us to build larger interfaces from smaller ones while preserving minimality.  This also allows us to consider how we might \emph{decompose} $\epsilon$-transducers as the composition of other $\epsilon$-transducers.  

Crucially, transducer factorization provides a principled route to modular causal-state decomposition: it breaks a monolithic belief state into interpretable sub-belief states, each aligned with a particular informational subsystem. This links structural decomposability with semantic interpretability.  The causal states of a machine reflect belief states \citep{rosasai}.  Rather than inferring intractably complex belief states from high-dimensional data in a large environment, we can leverage its natural modularity to localize causal structure within subcomponents, substantially accelerating computation.  This creates a bridge between structural transparency and computational efficiency, with direct implications for AI safety and interpretability.  By making inference tractable and modular, factored transducers open the door to scalable, mechanistically grounded world models.

\section{Discussion}

We have developed a framework for composing, decomposing, and coarse-graining transducers that treats world models as modular input–output machines rather than opaque monoliths. 
Starting from the notion of interfaces and their realization as transducers, we showed how layered structures of stochastic dynamics can be represented algebraically and combined into larger feedforward networks. 
Building on a long line of work on transducer and automata composition~\citep{schutzenberger1961remark,hopcroft1979introduction,mohri-1997-finite,mohri2002weighted}, our composition is formulated both in terms of stochastic kernels and their associated linear operators, and can be used to ensure that complex world models formed by stacking and wiring simpler components are still describable as a single higher-level transducer. This gives a principled way to move between `micro' and `macro' views of an environment~\citep{rosas2024software}, treating networks of mechanisms and their aggregate behaviour within one unified formalism.

The central contribution of this work is to highlight conditions under which a large transducer can be factored into a network of simpler sub-transducers, following similar approaches specialised in deterministic automata~\citep{krohn-rhodes-1965-algebraic,eilenberg1976automata,maler2010krohnrhodes,egri-nagy2005algebraic}. 
That said, it is important to note that our results do not factor a single process into prime processes --- as e.g. the Krohn-Rhodes decomposition~\citep{krohn-rhodes-1965-algebraic} does for semigroups (see \citet{maler1993decomposition} for probabilistic extensions). 
Instead, our results decompose multiple processes according to their causal dependencies, being closer to the causal graph discovery literature~\citep{glymour2019review,nogueira2022methods}.
Using novel information-theoretic quantities --- Intransducibility when latents are available, and Acausality when they are not --- we provide tests for when a subset of variables can be treated as the output of a causal module driven by the rest. 
These conditions lead to decomposition algorithms that `peel off' modules one by one, yielding a directed acyclic graph that captures a causal architecture implicit in the joint process.

Using these methods, we show how a seemingly entangled world model can --- when it obeys appropriate conditional independence constraints --- be rewritten as a composition of prime transducers, each associated with a distinct functional subsystem. 
This structural decomposition is not only conceptually clarifying but also computationally useful. Once a process has been factored into an ordered set of modules, learning and inference can proceed locally within each sub-transducer conditioned on the upstream modules that influence it. Instead of inferring a single huge latent state space, we can learn smaller modules, each with its own belief dynamics, and then compose them.
This creates natural opportunities for parallelizable inference and training~\citep{guestrin2003efficient} --- provided that the factorization captures genuine conditional independence rather than merely approximate separation.

We also showed how these ideas interact with coarse-graining and minimal predictive models. Sequences of causally adjacent transducers can be grouped into larger effective units without violating the feedforward structure, enabling multiscale descriptions that range from fine-grained micro-dynamics to high-level macro-modules. 
When each component is represented by its $\epsilon$-transducer, we proved that composition preserves minimality: the product of minimal predictors is itself the minimal predictor of the composed interface. 
This result connects our structural factorization to the emerging literature on belief states and geometric causal structure in neural networks, suggesting that the belief states of AI systems may themselves be decomposable into interacting belief subspaces, rather than taking place in a monolithic high-dimensional latent space.

There are, however, several important limitations. First, computing Intransducibility or Acausality requires access to joint distributions over long histories~\citep{jurgens2021divergent,marzen2016predictive}, which can make it intractable. In practice one may only need finite-horizon, low-order, or variational approximations, together with statistical tests that distinguish genuine structure from sampling noise~\citep{fiderer2025work,rosasai}. 
Second, we have restricted attention to feedforward, mechanistically stationary transducers, whereas many realistic agent–environment systems involve feedback, non-stationarity, and adaptation~\citep{beer1995dynamical,zhu2022adaptive}. Extending our results to such settings --- while preserving a useful notion of modularity --- remains an open challenge. 
Third, we have focused on formal properties and did not include empirical case studies; validating these ideas on trained world models, reinforcement learning environments, and large neural networks is a priority for future work.

Despite these caveats, we believe the framework developed here offers a promising route toward structurally transparent and computationally efficient world models. 
By treating environments and agents as composable, decomposable transducers, we gain a language in which to design, analyse, and interrogate complex AI systems. 
For safety and alignment, this provides a way to link mechanistic interpretability --- through causal states and belief dynamics --- to the modular organization of the broader agent--environment loop. 
Ultimately, our hope is that tools of this kind will help support AI systems whose internal structure is not only powerful and data-efficient, but also intelligible, auditable, and amenable to principled control.

\appendix

\section*{Acknowledgments}
\label{sec:ack}
The authors thank Artemy Kolchinsky, Daniel Polani, Paul Riechers, Adam Shai, and Lucas Teixeira for insightful discussions that helped to shape this work. 
The work of A.B. and F.R. was supported by the UK ARIA Safeguarded AI programme.
The work of F.R. was also supported by the PIBBSS Affiliateship programme and by Open Philanthropy. 
The work of D.H. was supported by a UKRI AI World Leading Researcher Fellowship (grant number EP/W002949/1) awarded to Michael Wooldridge.
M.B. was supported by JST, Moonshot R\&D, Grant Number JPMJMS2012.


\bibliography{main}
\bibliographystyle{rlj}

\beginSupplementaryMaterials

\section{Feedback interfaces}
\label{app:Interfaces}

In the context of a perception-action loop linking an agent and an environment, the environment can be thought of as a system that stochastically turns action sequences $x_{0:t}=x_0 \cdots x_{t-1}$ into observation sequences $y_{0:t}=y_0 \cdots y_{t-1}$ for any time length $t$.  Similarly, the agent operates analogously, stochastically transforming observation sequences into action sequences.  The action sequence is the agent's output and the environment's input, while the observation sequence is the agent's input and the environment's output.  Both agent and environment are specified by an interface $\mathcal{I}$~\citep{rosasai}, which produces an output process for each input sequence~\citep{fiderer2025work}
\begin{align}
    \mathcal{I}_E(y_{0:t}|x_{0:t}) &=  \Pr(Y_{0:t}=y_{0:t}|X_{0:t}=x_{0:t}) \nonumber \\
    \mathcal{I}_A(x_{0:t}|y_{0:t}) &=  \Pr(X_{0:t}=x_{0:t}|Y_{0:t}=y_{0:t}).
\end{align}
Here, the capital variables represent random variables while the lowercase represent specific realizations.  Together, they produce the joint probability of an action-observation sequence in the perception-action loop~\citep{fiderer2025work}:
\begin{align}
\Pr(X_{0:t}=x_{0:t},Y_{0:t}=y_{0:t})=\mathcal{I}_E(y_{0:t}|x_{0:t})\mathcal{I}_A(x_{0:t}|y_{0:t}).
\end{align}
While it may appear that this equality does not obey Bayes' rule, it can be derived using the causal architecture of an interface and the fact that it can be described via a transducer.  Let us say that the environment is given by a kernel $e(y,r'|x,r)=\Pr(Y_t=y,R_{t+1}=r'|X_t=x,R_t=r)$, and the kernel of the agent is given by $a(x,s'|y,s)=\Pr(X_{t+1}=x,S_{t+1}=s'|Y_t=y,S_t=s)$, where $S_t$ is the latent of the agent and $R_t$ is the latent of the environment.  From this, we can determine the joint distribution of the perception-action loop through recursive operation
\begin{align}
    & \Pr(X_{0:t}=x_{0:t},Y_{0:t}=y_{0:t}) \nonumber
    \\  = &\sum_{s_{0:t},r_{0:t+1}}\Pr(X_{0:t}=x_{0:t},Y_{0:t}=y_{0:t},R_{0:t+1}=r_{0:t+1},S_{0:t+1}=s_{0:t+1}) \nonumber \\ 
    \\  =& \sum_{s_{0:t},r_{0:t+1}} \Pr(X_0=x_0,S_0=s_0)\Pr(R_0=r_0)e(y_{t-1},r_{t}|x_{t-1},r_{t-1})
     \\ & \times \prod_{i=0}^{t-2}a(x_{i+1},s_{i+1}|y_i,s_i)e(y_{i},r_{i+1}|x_i,r_i)
    \\  =& \left( \sum_{s_{0:t}} \Pr(X_0=x_0,S_0=s_0) \prod_{i=0}^{t-2}a(x_{i+1},s_{i+1}|y_i,s_i) \right)
    \\ & \times \left( \sum_{r_{0:t+1}} \Pr(R_0=r_0)
     \prod_{i=0}^{t-1}e(y_{i},r_{i+1}|x_i,r_i) \right)
     \\ & = \Pr(X_{0:t}=x_{0:t}|Y_{0:t}=y_{0:t})\Pr(Y_{0:t}=y_{0:t}|X_{0:t}=x_{0:t})
     \\ & = \mathcal{I}_\text{A}(x_{0:t}|y_{0:t})\mathcal{I}_\text{E}(y_{0:t}|x_{0:t}).
\end{align}

The interface characterizes the behavior of the agent or environment, independent of the details of their internal models or other latents.  \autoref{fig:Perception-Action_Loop} shows how the perception-action loop can be decomposed into distinct interfaces in this way.

\begin{figure*}[ht]
\centering
\includegraphics[width=.6\columnwidth]{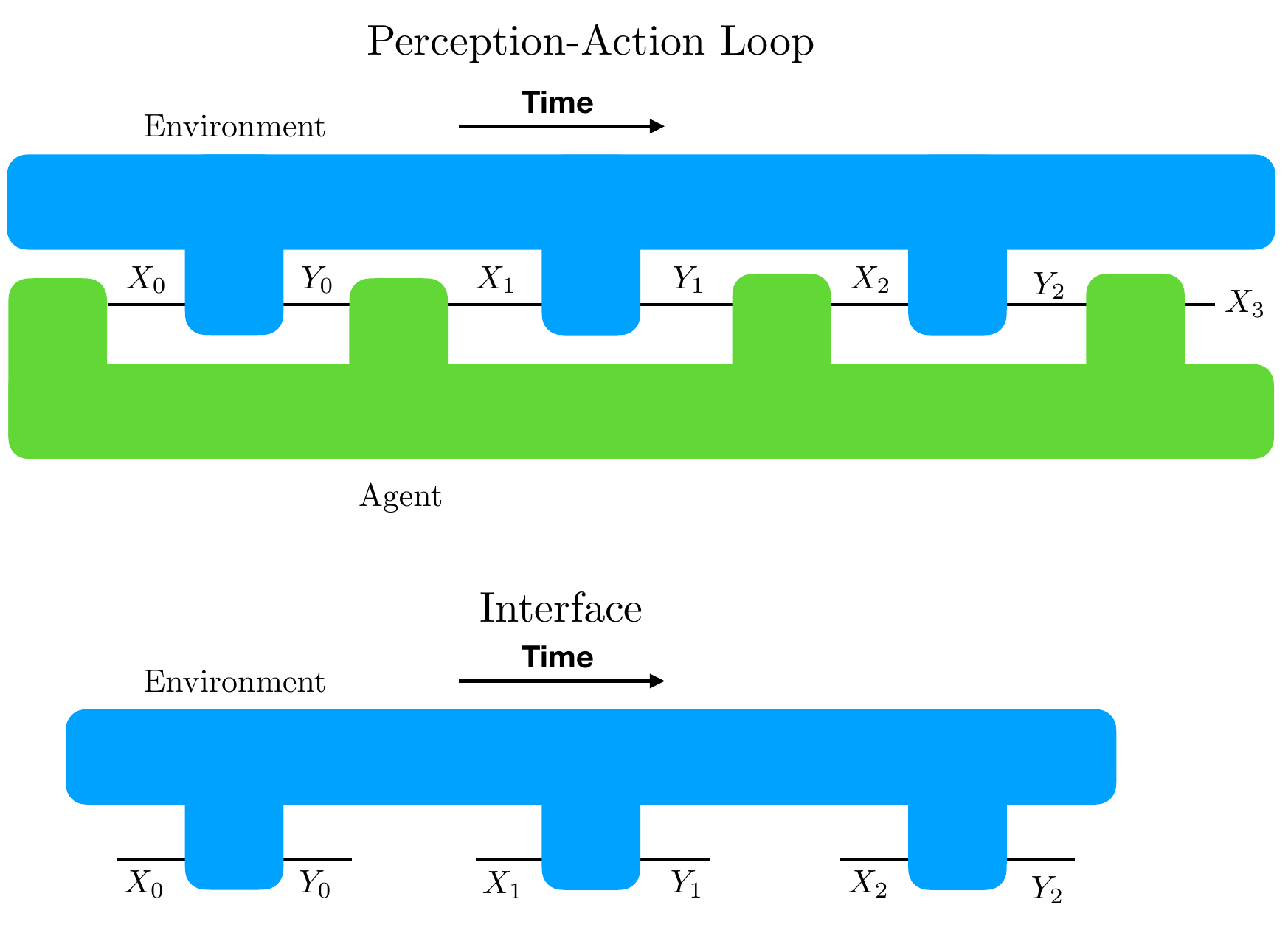}
\caption{A perception-action loop (top) operates through the exchange of actions $X_t$ that pass information from agent to environment, and observations $Y_t$ that pass information from the environment to the agent.  This interwoven circuit produces a joint process over actions and observations.  We can isolate the dynamics of either the agent or the environment, which are both represented by an interface.  This is a stochastic circuit from an input sequence to an output sequence.}
\label{fig:Perception-Action_Loop} 
\end{figure*}

Causal interfaces are well-suited to describe environments within perception-action loops, where $X_t$ corresponds to actions that precede observations $Y_t$. In a feedforward context, the conditional probabilities in the definition of the interface exactly match observed probabilities $\Pr(Y_{0:t}|X_{0:t})$. However, a perception-action loop involves \emph{feedback}, which can change the associated distributions by allowing past outputs $\overleftarrow{Y}_t$ to influence future inputs $\overrightarrow{X}_t$ \emph{indirectly} through a channel outside the interface in question.  Nevertheless, the interface allows us to exactly calculate observed probabilities of the feedback process through a product of the probabilities in the two interacting interfaces as shown above \citep{fiderer2025work}. 

In the feedforward case, the conditional probabilities that specify an interface are the same as observed conditional probabilities.  The independence of past outputs from future inputs is expressed in terms of distributions induced by a causal interface that satisfy  
\begin{align}
\Pr(\overleftarrow{Y}_t|\overleftarrow{X}_t,\overrightarrow{X}_t)=\Pr(\overleftarrow{Y}_t|\overleftarrow{X}_t).
\end{align} 
This probabilistic relation will not necessarily hold in a perception-action loop, because feedback from output pasts $\overleftarrow{Y}_t$ can indirectly influence the future inputs $\overrightarrow{X}_t$.  If the nonanticipatory condition is not satisfied in a feedforward network, then the interface somehow utilizes future inputs before receiving them, violating causality.  However, feedback networks can violate this condition, because the output past is processed and returned to the input future.

\section{Universality of mechanically stationary transducers}
\label{app:MechanicallyStationary}

\citet{rosasai} consider time-dependent kernels, such that the relationship between random variables shown in \autoref{eq:kernel} can change as a function of time.  However, let's show how mechanistically stationary transducers are just as expressive, capable of realizing any interface that their time-dependent counterparts can produce, given sufficient latent memory. 

Recall that \emph{any} nonanticipatory interface can be generated by a transducer.  Let us label the kernel of this transducer $T^{(y|x)}_{r \rightarrow r'}(t)$ with latent memory space $\mathcal{R}$.  We can construct a mechanically stationary transducer with kernel $T^{(y|x)}_{m \rightarrow m'}$ with a latent memory space $\mathcal{M}$ that is the direct product of the original latent memory space $\mathcal{R}$ and the space of time indices $\mathcal{T}=\{0,1,\cdots \}$.  We define the probabilities of this modified transducer via relation to the original kernel:
\begin{align}
    T^{(y|x)}_{r,t \rightarrow r',t'}=T^{(y|x)}_{r \rightarrow r'}(t) \delta_{t',t+1}.
\end{align}
Thus, the probability of following the input/output trajectory $y_{0:t}|x_{0:t}$ is the same
\begin{align}
    \mathcal{I}[y_{0:\tau}|x_{0:\tau}]& =\sum_{m_{0:\tau+1}}\Pr(M_0=m_0)\prod_{i=0}^{\tau-1}T^{(y_i|x_i)}_{m_i \rightarrow m_{i+1}}
    \\ & =\sum_{r_{0:\tau+1},t_{0:\tau+1}}\Pr(R_0=r_0,T_0=t_0)\prod_{i=0}^{\tau-1}T^{(y_i|x_i)}_{r_i,t_i \rightarrow r_{i+1},t_{i+1}}
    \\ & \text{assuming the initial time latent memory state is }t_0=0
    \\ & =\sum_{r_{0:\tau+1},t_{0:\tau+1}}\Pr(R_0=r_0)\delta_{t_0,0}\prod_{i=0}^{\tau-1}T^{(y_i|x_i)}_{r_i \rightarrow r_{i+1}}(t_i) \delta_{t_{i+1},t_{i}+1}
    \\ & \text{delta function selects out single trajectory } t_{0:\tau}=\{0, \cdots, \tau-1\}
     \\ & =\sum_{r_{0:\tau+1}}\Pr(R_0=r_0)\prod_{i=0}^{\tau-1}T^{(y_i|x_i)}_{r_i \rightarrow r_{i+1}}(i).
\end{align}
This final line is precisely the expression for the interface that one arrives at from the general transducer.  Therefore, the mechanically stationary transducer we design produces the same interface.

\section{Nonanticipatory transducers}
\label{app:Nonanticipatory Transducers}

\begin{proof}
We have our two properties:
\begin{enumerate}
    \item $\Pr(Y|X)$ is nonanticipatory, meaning that
    \begin{align}
\Pr(\overleftarrow{Y}_t|\overleftarrow{X}_t,X_{t:t+dt})=\Pr(\overleftarrow{Y}_t|\overleftarrow{X}_t)
    \end{align}
    \item $\Pr(Y|X)$ has a transducer presentation, meaning that there exists a $R$ process such that
    \begin{align}
\Pr(\overleftarrow{Y}_{t+1},\overleftarrow{R}_{t+2}|\overleftarrow{X}_{t+1})=\Pr(Y_t,R_{t+1}|X_t,R_t)\Pr(\overleftarrow{Y}_{t},\overleftarrow{R}_{t+1}|\overleftarrow{X}_{t})
    \end{align}
\end{enumerate}
\begin{itemize}
    \item $1. \Rightarrow 2. $: Let us decompose the probability of an output given an input
    \begin{align}
    \Pr(\overleftarrow{Y}_{t+1}|\overleftarrow{X}_{t+1})&=\Pr(\overleftarrow{Y}_tY_t|\overleftarrow{X}_tX_t)
\\&=\Pr(Y_t|\overleftarrow{Y}_t\overleftarrow{X}_tX_t)\Pr(\overleftarrow{Y}_t|\overleftarrow{X}_tX_t)
    \\ & \text{using the nonanticipatory condition}
    \\ & =\Pr(Y_t|\overleftarrow{Y}_t\overleftarrow{X}_tX_t)\Pr(\overleftarrow{Y}_t|\overleftarrow{X}_t).
    \end{align}
    Thus, we have a recursive relation for building up the interface time-step by time-step as a transducer. We can implement the transducer by simply choosing the latent memory to be a copy of the input-output history $R_t=\overleftarrow{X}_t\overleftarrow{Y}_t$.
    \begin{align}
\Pr(Y_t,R_{t+1}|X_t,R_t)\Pr(\overleftarrow{Y}_{t},\overleftarrow{R}_{t+1}|\overleftarrow{X}_{t}) & =\Pr(Y_t,\overleftarrow{X}_{t+1}\overleftarrow{Y}_{t+1}|X_t,\overleftarrow{X}_t\overleftarrow{Y}_t)\Pr(\overleftarrow{Y}_{t},\overleftarrow{X}_t\overleftarrow{Y}_t|\overleftarrow{X}_{t}) 
\\ & \text{eliminating redundancy using }\Pr(A,B,B|A)=\Pr(B|A) 
\\ & =\Pr(Y_t,|X_t,\overleftarrow{X}_t\overleftarrow{Y}_t)\Pr(\overleftarrow{Y}_{t}|\overleftarrow{X}_{t}) 
\\ & =\Pr(\overleftarrow{Y}_{t+1}|\overleftarrow{X}_{t+1}) 
\\ & \text{reintroducing redundancy}
\\ & = \Pr(\overleftarrow{Y}_{t+1},\overleftarrow{R}_{t+2}|\overleftarrow{X}_{t+1}).
    \end{align}
Thus, this satisfies the condition of being a transducer.
\item $2. \Rightarrow 1. :$ Let us decompose an expression for the output future and past, and the latent memory $R_t$ that links them, conditioned on the input past and history using the fact that the latent memory partitions past and future
\begin{align}
\Pr(\overleftarrow{Y}_t,\overrightarrow{Y}_t,R_t|\overleftarrow{X}_t,\overrightarrow{X}_t)& = \Pr(\overleftarrow{Y}_t|R_t,\overrightarrow{X}_t)\Pr(\overleftarrow{Y}_t,R_t|\overleftarrow{X}_t).
\end{align}
We simply sum to marginalize over the latent memory state $R_t$ and future outputs $\overrightarrow{Y}_t$ to obtain our desired expression for being nonanticipatory
\begin{align}
\Pr(\overleftarrow{Y}_t|\overleftarrow{X}_t,\overrightarrow{X}_t)& = \Pr(\overleftarrow{Y}_t|\overleftarrow{X}_t).
\end{align}
\end{itemize}
\end{proof}

\section{Proof of \autoref{res:kronecker}}
\label{app:kronecker}

\begin{proof}
The kernel of the composite transducer is the product of the kernels of each of its components
\begin{align}
    V^{(yz|x)}_{rs \rightarrow r's'} & = \Pr(Z_t=z,Y_t=y,R_{t+1}=r',S_{t+1}=s'|X_t=x,R_t=r,S_t=s)
    \\ & =\Pr(Z_t=z,S_{t+1}=s'|X_t=x,Y_t=y,S_t=s)\Pr(Y_t=y,R_{t+1}=r'|X_t=x,R_t=r)
    \\ & = U^{(z|xy)}_{s \rightarrow s'}T^{(y|x)}_{r \rightarrow r'}.
\end{align}
The latent space of the composite transducer $V$ is the direct product $\mathcal{R} \times \mathcal{S}$ of the latent spaces of its components $T$ and $U$.  The elementary vector $\mathbf{e}_{rs}$ in this vector space is the Kronecker product of the elementary vectors in the component vector spaces
\begin{align}
    \mathbf{e}_{rs}=\mathbf{e}_{r} \otimes \mathbf{e}_{s}.
\end{align}
Thus, the composite linear operator is itself the direct product of the linear operators of the components
\begin{align}
    \hat{V}^{(yz|x)} & \equiv \sum_{r,s,r',s'} \mathbf{e}_{r's'}^\intercal V^{(yz|x)}_{rs \rightarrow r's'} \mathbf{e}_{rs}
    \\ &  = \sum_{r,s,r',s'} \mathbf{e}_{r'}^\intercal \otimes \mathbf{e}_{s'}^\intercal U^{(z|xy)}_{s \rightarrow s'}T^{(y|x)}_{r \rightarrow r'} \mathbf{e}_{r} \otimes \mathbf{e}_{s}
    \\ &  = \sum_{r,s,r',s'}  T^{(y|x)}_{r \rightarrow r'} \mathbf{e}_{r'}^\intercal \mathbf{e}_{r} \otimes  U^{(z|xy)}_{s \rightarrow s'}\mathbf{e}_{s'} \mathbf{e}_{s}
    \\ &  = \sum_{r,r'}  T^{(y|x)}_{r \rightarrow r'} \mathbf{e}_{r'}^\intercal \mathbf{e}_{r} \otimes  \sum_{s,s'} U^{(z|xy)}_{s \rightarrow s'}\mathbf{e}_{s'} \mathbf{e}_{s}
    \\ & = \hat{T}^{(y|x)} \otimes \hat{U}^{(z|xy)}.
\end{align}
\end{proof}

\section{Other types of transducer composition}
\label{app:Other Composition Types}

Here, we formally define common types of weighted transducer composition used in the literature, and discuss how they relate to our definition in \autoref{def:Composition}.

There are two main types of composition of weighted finite state transducers (WFSTs): in parallel and in series \citep{mohri-1997-finite,mohri2002weighted}.

\begin{definition}[Parallel composition]
	Let $T = (\mathcal{X}, \mathcal{Y}, \mathcal{R}, T_{r\to r'}^{(y|x)})$ and $U = (\mathcal{Z}, \mathcal{W}, \mathcal{S}, U_{s\to s'}^{(w|z)})$ be transducers. The parallel composition of $T$ and $U$ is a new transducer $V = (\mathcal{X}\times \mathcal{Z}, \mathcal{Y}\times \mathcal{W}, \mathcal{R} \times \mathcal{S}, V_{rs\to r's'}^{(yw|rz)})$ with input alphabet $\mathcal{X}\times \mathcal{Z}$, output alphabet $\mathcal{Y}\times \mathcal{W}$, and stochastic kernel $V_{rs\to r's'}^{(yw|rz)} = T_{r\to r'}^{(y|x)}U_{s\to s'}^{(w|z)}$.
\end{definition}

This is a special case of divergent composition c) from \autoref{sec:special-cases}, where $U$ is not dependent on the output $\mathcal{Y}$ of $T$. Note that, in our formulation in \autoref{def:Composition}, both $T$ and $U$ take in the same input, while the parallel composition defined above assumes separate inputs. This can be easily reconciled by taking the single input to both transducers to be $(\mathcal{X}, \mathcal{Z})$ and letting $T$ ignore the second argument and $U$ ignore the first argument.

\begin{definition}[Serial composition]
	Let $T = (\mathcal{X}, \mathcal{Y}, \mathcal{R}, T_{r\to r'}^{(y|x)})$ and $U = (Y, Z, S, U_{s\to s'}^{(z|y)})$ be transducers. The serial composition of $T$ and $U$ is a new transducer $V = (\mathcal{X}, \mathcal{Z}, \mathcal{R} \times \mathcal{S}, V_{rs\to r's'}^{(z|x)})$ with input alphabet $X$, output alphabet $Z$, and stochastic kernel $V_{rs\to r's'}^{(z|x)} = \sum_{y\in Y}T_{r\to r'}^{(y|x)}U_{s\to s'}^{(z|y)}$.
\end{definition}

This is closely related to the case of series composition a) discussed in \autoref{sec:special-cases}, where $U$ takes only the output of $T$ as its input and ignores the original input $\mathcal{X}$ of $T$. Serial composition is distinct from series composition in that the combined kernel of the serial composition marginalizes out all the possible intermediate symbols from $\mathcal{Y}$, while series composition preserves these observables.

A third type of composition of transducers is the cascade composition. In the literature, this type was originally defined for unweighted transducers or finite semigroups \citep{krohn-rhodes-1965-algebraic, eilenberg1976automata, egri-nagy2005algebraic, maler2010krohnrhodes}, and has also been extended to stochastic semigroups and automata \citep{carlsson2015prime, maler1993decomposition}. Here, we give a definition of explicit stochastic transducers.

\begin{definition}[Cascade composition]
    Let $T = (\mathcal{X}, \mathcal{Y}, \mathcal{R}, T_{r\to r'}^{(y|x)})$ with $\mathcal{Y}=\mathcal{R}$, such that $T_{r\to r'}^{(y|x)} \neq 0$ only if $y = r$, and let $U = (\mathcal{X} \times \mathcal{R}, \mathcal{Z}, \mathcal{S}, U_{s\to s'}^{(z|xy)})$.  Then the cascade composition of $T$ and $U$ is a new transducer $V = (\mathcal{X}, \mathcal{Z}, \mathcal{R}\times \mathcal{S}, V_{rs\to r's'}^{(z|x)})$ with $V_{rs\to r's'}^{(z|x)} = \sum_{y\in Y} T_{r\to r'}^{(y|x)}U_{s\to s'}^{(z|xy)} = T_{r\to r'}^{(r|x)}U_{s\to s'}^{(z|xr)}$.
\end{definition}

This is also a special case of our construction, again marginalizing out the intermittent symbols. Because of the special constraint on the kernel that the output of $T$ has to be its state $r\in\mathcal{R}$ before transitioning, there is only one possible intermittent symbol for each transition, eliminating the need to sum over multiple symbols.

\section{Proof of \autoref{res:when_trans}}
\label{app:when_trans}

\begin{proof}

\begin{figure*}
\centering
\includegraphics[width=.6\columnwidth]{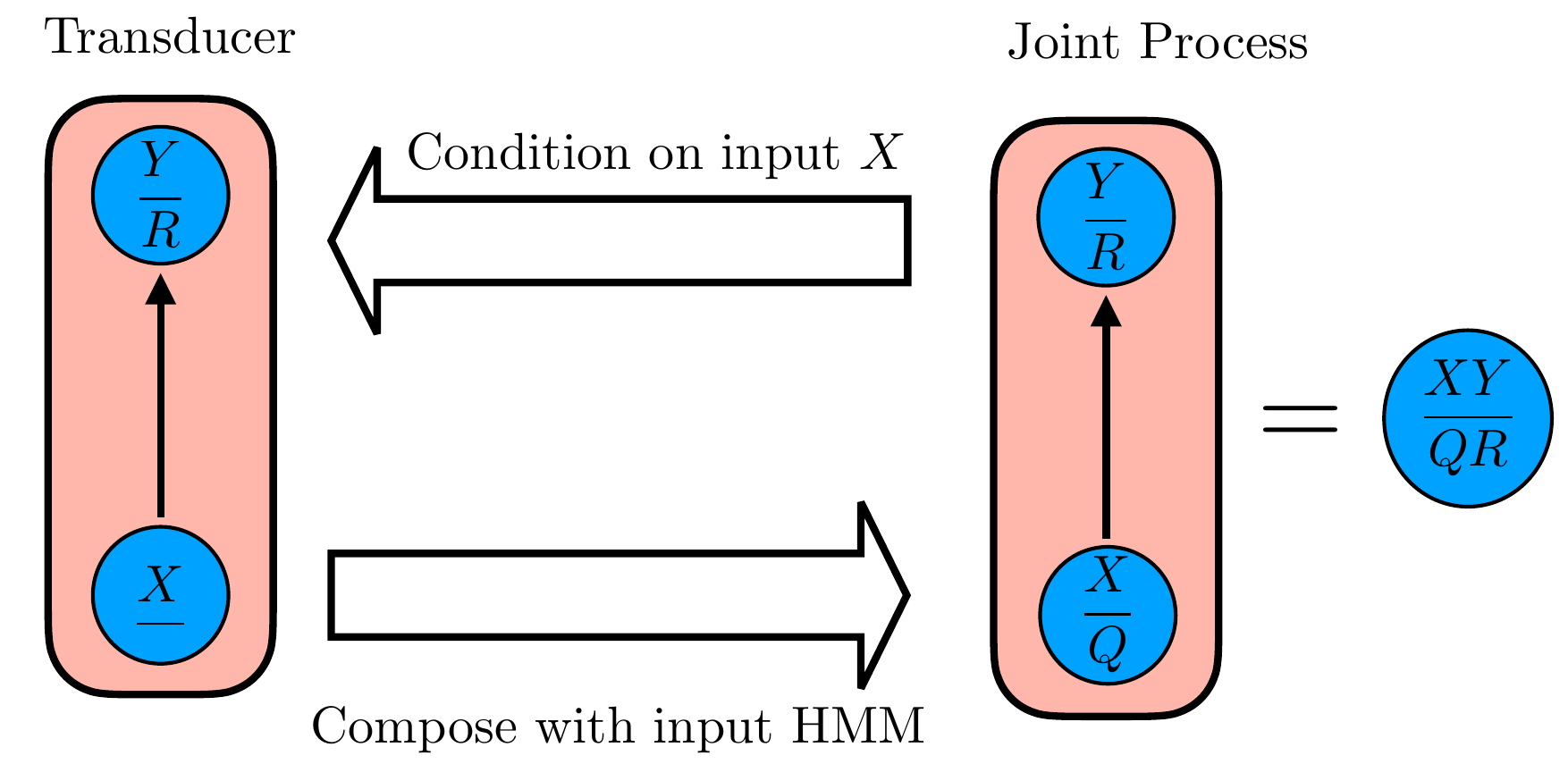}
\caption{
\centering
\small{We can shift between two equivalent viewpoints: one which is conditional on inputs $\frac{X}{\null}$, and one which addresses the joint process.}}
\label{fig:Transducer_Vs_Joint} 
\end{figure*}

Consider the random variables for an input process $X$, latent memory process $R$, and output process $Y$.  Given the distribution $\Pr(X,Y,R)$ over these variables, we now wish to know whether this represents the operation of a transducer.  As shown in \autoref{fig:Transducer_Vs_Joint}, while we started talking about conditional processes that depend on unspecified input sequences, there is an equivalence with the joint distribution when driven by a known input distribution. \citet{rosasai} established that this distribution represents the operation of a transducer iff the following equality holds for all times $t$: 
\begin{align}\label{eq:transducer_condition}
I[\overrightarrow{R}_{t+1},\overrightarrow{Y}_t;\overleftarrow{Y}_t,\overleftarrow{R}_t,\overleftarrow{X}_t|\overrightarrow{X}_t,R_t]=0.
\end{align}
As shown in \autoref{fig:Condensed_Transducer}, the present latent state $R_t$ of a transducer shields the future of the latent states $\overrightarrow{R}_{t+1}$, and outputs $\overrightarrow{Y}_t$, from the past $\overleftarrow{Y}_t$,$\overleftarrow{X}_t$,$\overleftarrow{R}_t$, when you condition on future inputs $\overrightarrow{X}_t$.  Conditioning on the input future is necessary, because information can be carried between the input past and input future outside of the latent memory of the transducer.  This measure allows us to determine the causal influence of the processes on each other.  

Informally, Intransducibility is large when the joint process $(X,Y,R)$ behaves in a way that cannot be implemented by any causal transducer with latent state $R$:

Since each term is non-negative, the Intransducibility is only zero when all terms are zero.  Thus, the condition for the distribution $\Pr(X,Y,R)$ to be consistent with a transducer that transforms $X$ to $Y$ with latent state $R$ is met.  In the reverse direction, if we assume the distribution is generated by a transducer, each term of the measure is zero, so the measure is zero overall.
\end{proof}

\section{Proving composite causal states}
\label{app:Proving Composite Causal States}

We wish to show that the composite map $\epsilon_{UT}$ is the $\epsilon$-map for the interface $\mathcal{I}[YZ|X]$.  For this to be true, it must be true that the following conditions are equivalent:
\begin{align}
\epsilon_{UT}(\overleftarrow{y}_t,\overleftarrow{z}_t|\overleftarrow{x}_t) & =\epsilon_{UT}(\overleftarrow{y}'_t,\overleftarrow{z}'_t|\overleftarrow{x}'_t)
\\ & \Leftrightarrow
\\ p(\overrightarrow{y}_t,\overrightarrow{z}_t|\overrightarrow{x}_t,\overleftarrow{y}_t,\overleftarrow{z}_t,\overleftarrow{x}_t)& =p(\overrightarrow{y}_t,\overrightarrow{z}_t|\overrightarrow{x}_t,\overleftarrow{y}'_t,\overleftarrow{z}'_t,\overleftarrow{x}'_t)\text{ for all } \overrightarrow{x},\overrightarrow{y},\overrightarrow{z}.
\end{align}
We can factor the probabilities in the equivalence expression
\begin{align}
p(\overrightarrow{z}_t,\overrightarrow{y}_t|\overrightarrow{x}_t,\overleftarrow{z}_t,\overleftarrow{y}_t,\overleftarrow{x}_t)=
p(\overrightarrow{z}_t|\overrightarrow{y}_t,\overrightarrow{x}_t,\overleftarrow{z}_t,\overleftarrow{y}_t,\overleftarrow{x}_t)
p(\overrightarrow{y}_t|\overrightarrow{x}_t,\overleftarrow{z}_t,\overleftarrow{y}_t,\overleftarrow{x}_t),
\end{align}
where we use the shorthand notation $p(a|b)=\Pr(A=a|B=b)$.  Note that the future of $\overrightarrow{y}_t$ doesn't depend on the past $\overleftarrow{z}_t$ outside of its own past $\overleftarrow{y}_t$, meaning that
\begin{align}
p(\overrightarrow{y}_t|\overrightarrow{x}_t,\overleftarrow{z}_t,\overleftarrow{y}_t,\overleftarrow{x}_t)=p(\overrightarrow{y}_t|\overrightarrow{x}_t,\overleftarrow{y}_t,\overleftarrow{x}_t),
\end{align}
and
\begin{align}
p(\overrightarrow{z}_t,\overrightarrow{y}_t|\overrightarrow{x}_t,\overleftarrow{z}_t,\overleftarrow{y}_t,\overleftarrow{x}_t)=
p(\overrightarrow{z}_t|\overrightarrow{y}_t,\overrightarrow{x}_t,\overleftarrow{z}_t,\overleftarrow{y}_t,\overleftarrow{x}_t)
p(\overrightarrow{y}_t|\overrightarrow{x}_t,\overleftarrow{y}_t,\overleftarrow{x}_t).
\end{align}
This means that the condition for $\epsilon_{UT}$ to be the $\epsilon$-map reduces to
\begin{align}
\epsilon_{UT}(\overleftarrow{y}_t,\overleftarrow{z}_t|\overleftarrow{x}_t) & =\epsilon_{UT}(\overleftarrow{y}'_t,\overleftarrow{z}'_t|\overleftarrow{x}'_t)
\\ & \Leftrightarrow
\\ p(\overrightarrow{z}_t|\overrightarrow{y}_t,\overrightarrow{x}_t,\overleftarrow{z}_t,\overleftarrow{y}_t,\overleftarrow{x}_t)
p(\overrightarrow{y}_t|\overrightarrow{x}_t,\overleftarrow{y}_t,\overleftarrow{x}_t)& =p(\overrightarrow{z}_t|\overrightarrow{y}_t,\overrightarrow{x}_t,\overleftarrow{z}'_t,\overleftarrow{y}'_t,\overleftarrow{x}'_t)
p(\overrightarrow{y}_t|\overrightarrow{x}_t,\overleftarrow{y}'_t,\overleftarrow{x}'_t).
\end{align}
We break the proof that the composite map is the correct forward and reverse step
\begin{enumerate}
\item $\Rightarrow$: If $\epsilon_{UT}$ maps two histories $\overleftarrow{xyz}_t$ and $\overleftarrow{xyz}'_t$ to identical causal states, then $\epsilon_T(\overleftarrow{y}|\overleftarrow{x})=\epsilon_T(\overleftarrow{y}'|\overleftarrow{x}')$ and $\epsilon_U(\overleftarrow{z}|\overleftarrow{y},\overleftarrow{x})=\epsilon_U(\overleftarrow{z}'|\overleftarrow{y}',\overleftarrow{x}')$.  Furthermore, this implies that
\begin{align}
p(\overrightarrow{y}_t|\overrightarrow{x}_t,\overleftarrow{y}_t,\overleftarrow{x}_t) & =p(\overrightarrow{y}_t|\overrightarrow{x}_t,\overleftarrow{y}'_t,\overleftarrow{x}'_t) 
\\ p(\overrightarrow{z}_t|\overrightarrow{y}_t,\overrightarrow{x}_t,\overleftarrow{z}_t,\overleftarrow{y}_t,\overleftarrow{x}_t)& =p(\overrightarrow{z}_t|\overrightarrow{y}_t,\overrightarrow{x}_t,\overleftarrow{z}'_t,\overleftarrow{y}'_t,\overleftarrow{x}'_t),
\end{align}
and therefore by multiplying these equalities together, we arrive at the condition that the future distributions are equivalent
\begin{align}
p(\overrightarrow{z}_t|\overrightarrow{y}_t,\overrightarrow{x}_t,\overleftarrow{z}_t,\overleftarrow{y}_t,\overleftarrow{x}_t)
p(\overrightarrow{y}_t|\overrightarrow{x}_t,\overleftarrow{y}_t,\overleftarrow{x}_t)& =p(\overrightarrow{z}_t|\overrightarrow{y}_t,\overrightarrow{x}_t,\overleftarrow{z}'_t,\overleftarrow{y}'_t,\overleftarrow{x}'_t)
p(\overrightarrow{y}_t|\overrightarrow{x}_t,\overleftarrow{y}'_t,\overleftarrow{x}'_t),
\end{align}
and equivalence is half-proved.
\item $\Leftarrow$:  We begin with the equality of future distributions and
sum over $z$ future to obtain:
\begin{align}
p(\overrightarrow{y}_t|\overrightarrow{x}_t,\overleftarrow{y}_t,\overleftarrow{x}_t)=p(\overrightarrow{y}_t|\overrightarrow{x}_t,\overleftarrow{y}'_t,\overleftarrow{x}'_t).
\end{align}
This implies that $\epsilon_T(\overleftarrow{y}'_t|\overleftarrow{x}'_t)=\epsilon_T(\overleftarrow{y}_t|\overleftarrow{x}_t)$, leveraging the assumption that $T$ is the $\epsilon$-transducer for $\mathcal{I}[Y|X]$. 

Plugging the equality of $y$ future distribution back into the original equality and factoring out, an equality for the future distribution of $z$:
\begin{align}
p(\overrightarrow{z}_t|\overrightarrow{y}_t,\overrightarrow{x}_t,\overleftarrow{z}_t,\overleftarrow{y}_t,\overleftarrow{x}_t)& =p(\overrightarrow{z}_t|\overrightarrow{y}_t,\overrightarrow{x}_t,\overleftarrow{z}'_t,\overleftarrow{y}'_t,\overleftarrow{x}'_t),
\end{align}
meaning that $\epsilon_U(\overleftarrow{z}|\overleftarrow{y},\overleftarrow{x})=\epsilon_U(\overleftarrow{z}'|\overleftarrow{y}',\overleftarrow{x}')$, and therefore the causal states are identical $\epsilon_{UT}(\overleftarrow{z}'_t,\overleftarrow{y}'_t|\overleftarrow{x}'_t)=\epsilon_{UT}(\overleftarrow{z}_t,\overleftarrow{y}_t|\overleftarrow{x}_t)$. 
\end{enumerate}

\end{document}